%
\documentclass[runningheads]{llncs}

\usepackage{amsmath}
\usepackage{enumerate}
\usepackage{fancyhdr}
\usepackage{color}
\usepackage{listings}
\usepackage{graphicx}
\usepackage{amssymb}
\usepackage{mathtools}
\usepackage{cite}

\usepackage{lipsum}

\usepackage[subtle]{savetrees}

\usepackage{algorithm}
\usepackage{algorithmic}

\usepackage{mathnotations}

\usepackage[misc]{ifsym}

\newtheorem{assumption}{Assumption}

%

\newcommand{\E}{\mathbb{E}}
\newcommand{\x}{\Tilde{x}_t}

\begin{document}
\title{Model Selection in Reinforcement Learning with General Function Approximations}
\titlerunning{Model Selection for RL}
%

 \author{Avishek Ghosh$^\star$\inst{1}(\Letter) \and
 Sayak Ray Chowdhury$^\star$\inst{2} 
 \authorrunning{A. Ghosh and S. Ray Chowdhury}
\institute{Halıcıoğlu Data Science Institute (HDSI), UC San Diego, USA \and
Boston University, USA \\
 \email{a2ghosh@ucsd.edu, sayak@bu.edu}\\
}
}

\toctitle{Model Selection in Reinforcement Learning}
\tocauthor{Avishek~Ghosh, Sayak~Ray Chowdhury}
\maketitle              
\vspace{-5mm}
\begin{abstract}
We\footnote{$^\star$ Avishek Ghosh and Sayak Ray Chowdhury contributed equally.} consider model selection for classic Reinforcement Learning (RL) environments -- Multi Armed Bandits (MABs) and  Markov Decision Processes (MDPs) -- under general function approximations. In the model selection framework, we do not know the function classes, denoted by $\mathcal{F}$ and $\mathcal{M}$, where the true models -- reward generating function for MABs and and transition kernel for MDPs -- lie, respectively. Instead, we are given $M$ nested function (hypothesis) classes such that true models are contained in at-least one such class. In this paper, we propose and analyze  efficient model selection algorithms for MABs and MDPs, that \emph{adapt} to the smallest function class (among the nested $M$ classes) containing the true underlying model. Under a separability assumption on the nested hypothesis classes, we show that the cumulative regret of our adaptive algorithms match to that of an oracle which knows the correct function classes (i.e., $\cF$ and $\cM$) a priori. Furthermore, for both the settings, we show that the cost of model selection is an additive term in the regret having weak (logarithmic) dependence on the learning horizon $T$.
\keywords{Model Selection  \and Bandits \and Reinforcement Learning}
\end{abstract}

\section{INTRODUCTION}
\label{sec:intro}
We study the problem of  \emph{model selection} for Reinforcement Learning problems, which refers to choosing the appropriate hypothesis class, to model the mapping from actions to expected rewards. We choose two particular frameworks---(a) Multi-Armaed Bandits (MAB) and (b) markov Decision Processes (MDP). Specifically, we are interested in studying the model selection problems for these frameworks without any function approximations (like linear, generalized linear etc.). Note that, the problem of model selection plays an important role in applications such as personalized recommendations, autonomous driving, robotics as we explain in the sequel. Formally, a family of nested hypothesis classes $\mathcal{H}_f$, $f \in \mathcal{F}$ is specified, where each class posits a plausible model for mapping actions to expected rewards. Furthermore, the family $\mathcal{F}$ is totally ordered, i.e., if $f_1 \leq f_2$, then $\mathcal{H}_{f_1}\subseteq \mathcal{H}_{f_2}$. It is assumed that the true model is contained in at least one of these specified families. Model selection guarantees then refer to algorithms whose regret scales in the complexity of the \emph{smallest hypothesis class containing the true model}, even though the algorithm was not aware apriori.

Multi-Armed Bandits (MAB) \cite{cesa2006prediction} and Markov decision processes (MDP) \cite{puterman2014markov} are classical frameworks to model a reinforcement learning (RL) environment, where an agent interacts with the environment by taking successive decisions and observe rewards generated by those decisions. One of the objectives in RL is to maximize the total reward accumulated over multiple rounds, or equivalently minimize the \emph{regret} in comparison with an optimal policy \cite{cesa2006prediction}. Regret minimization is useful in several sequential decision-making problems such as portfolio allocation and sequential investment, dynamic resource allocation in communication systems, recommendation systems, etc. In these settings, there is
no separate budget to purely explore the
unknown environment; rather, exploration and exploitation need to be carefully balanced. 

Optimization over large domains under restricted feedback is an important problem and has found applications in dynamic pricing for economic markets \cite{BesZee09:dynpricinglearning}, wireless communication \cite{chang-lai-1987} and recommendation platforms (such as Netflix, Amazon Prime). Furthermore, in many applications (e.g., robotics, autonomous driving), the number of actions and the observable number of states can be very large or even infinite, which makes RL challenging, particularly in generalizing learnt
knowledge across unseen states and actions. For example, the game of Go has a state space with size $3^{361}$, and the state and action spaces of certain robotics applications can even be continuous.
In recent years, we have witnessed an explosion in the RL literature to tackle this challenge, both in theory (see, e.g., \cite{osband2014model,chowdhury2019online,ayoub2020model,wang2020provably,kakade2020information}), and in practice (see, e.g., \cite{mnih2013playing,williams2017model}).

In the first part of the paper, we focus on learning an unknown function $f^* \in \mathcal{F}$, supported over a compact domain, via online noisy observations. If the function class $\mathcal{F}$ is known, the optimistic algorithm of \cite{russo2013eluder} learns $f^*$, yielding a regret that depends on \emph{eluder dimension} (a complexity measure of function classes) of $\mathcal{F}$. However, in the applications mentioned earlier, it is not immediately clear how one estimates $\mathcal{F}$. Naive estimation techniques may yield an unnecessarily big $\mathcal{F}$, and as a consequence, the regret may suffer. On the other hand, if the estimated class, $\hat{\mathcal{F}}$ is such that $\hat{\mathcal{F}} \subset \mathcal{F}$, then the learning algorithm might yield a linear regret because of this infeasibility. Hence, it is important to estimate the function class properly, and here is where the question of model selection appears. The problem of model selection is formally stated as follows---we are given a family of $M$ hypothesis classes $\mathcal{F}_1 \subset \mathcal{F}_2\subset \ldots \subset \mathcal{F}_M$, and the unknown function $f^*$ is assumed to be contained in the family of nested classes. In particular, we assume that $f^*$ lies in $\mathcal{F}_{m^*}$, where $m^*$ is unknown.  Model selection guarantees refer to algorithms whose regret scales in the complexity of the \emph{smallest model class containing the true function $f^*$, i.e., $\mathcal{F}_{m^*}$}, even though the algorithm is not aware of that a priori.

In the second part of the paper, we address the model selection problem for generic MDPs without funcction approximation. The most related work to ours is by \cite{ayoub2020model}, which proposes an algorithm, namely \texttt{UCRL-VTR}, for model-based
RL without any structural assumptions, and it is based on the upper confidence RL and value-targeted regression principles. The regret of \texttt{UCRL-VTR} depends on the \emph{eluder dimension} \cite{russo2013eluder} and the \emph{metric entropy} of the corresponding family of distributions $\cP$ in which the unknown transition model $P^*$ lies. In most practical cases, however, the class $\mathcal{P}$ given to (or estimated by) the RL agent is quite pessimistic; meaning that $P^*$ actually lies in a small subset of $\mathcal{P}$ (e.g., in the game of Go, the learning is possible without the need for visiting all the states \cite{silver2017mastering}). We are given a family of $M$ nested hypothesis classes $\mathcal{P}_1 \subset \mathcal{P}_2 \subset \ldots \subset \mathcal{P}_M $, where each class posits a plausible model class for the underlying RL problem. The true model $P^*$ lies in a model class $\cP_{m*}$, where $m^*$ is unknown apriori. Similar to the functional bandits framework, we propose learning algorithms whose regret depends on the \emph{smallest model class containing the true model $P^*$}.

The problem of model selection have received considerable attention in the last few years. Model selection is well studied in the contextual bandit setting. In this setting, minimax optimal regret guarantees can be obtained by exploiting the structure of the problem along with an eigenvalue assumption \cite{osom, foster_model_selection, ghosh2021problem} We provide a comprehensive list of recent works on bandit model selection in Section~\ref{sec:related_work}. However, to the best of our knowledge, this is the first work to address the model selection question for generic (functional) MAB without imposing any assumptions on the reward structure. 

In the RL framework, the question of model selection has received little attention. In a series of works, \cite{pacchiano2020regret,aldo} consider the corralling framework of \cite{corral} for
 contextual bandits and reinforcement learning. While the corralling framework is versatile, the price for this is that the cost of model selection is multiplicative rather than additive. In particular, for the special case of linear bandits and linear reinforcement learning, the regret scales as $\sqrt{T}$ in time with an additional multiplicative factor of $\sqrt{M}$, while the regret scaling with time is strictly larger than $\sqrt{T}$ in the general contextual bandit. These papers treat all the hypothesis classes as bandit arms, and hence work in a (restricted) partial information setting, and as a consequence explore a lot, yielding worse regret. On the other hand, we consider all $M$ classes at once (full information setting) and do inference, and hence explore less and obtain lower regret.

Very recently, \cite{vidya} study the problem of model selection in RL with function approximation. Similar to the \emph{active-arm elimination} technique employed in standard multi-armed bandit (MAB) problems \cite{eliminate}, the authors eliminate the model classes that are dubbed misspecified, and obtain a regret of $\mathcal{O}(T^{2/3})$. On the other hand, our framework is quite different in the sense that we consider model selection for RL with \emph{general} transition structure. Moreover, our regret scales as $\mathcal{O}(\sqrt{T})$. Note that the model selection guarantees we obtain in the linear MDPs are partly influenced by \cite{ghosh2021problem}, where model selection for linear contextual bandits are discussed. However, there are a couple of subtle differences: (a) for linear contextual framework, one can perform pure exploration, and \cite{ghosh2021problem} crucially leverages that and (b) the contexts in linear contextual framework is assumed to be i.i.d, whereas for linear MDPs, the contexts are implicit and depend on states, actions and transition probabilities.
\vspace{-4mm}
\subsection{Our Contributions}
\vspace{-2mm}
In this paper, our setup considers \emph{any general} model class (for both MAB and MDP settings) that are totally bounded, i.e., for arbitrary precision, the metric entropy is bounded. Note that this encompasses a significantly larger class of environments compared to the problems with function approximation. Assuming nested families of reward function and transition kernels, respectively for MABs and MDPs, we propose adaptive algorithms, namely \emph{Adaptive Bandit Learning} (\texttt{ABL}) and \emph{Adaptive Reinforcement Learning} (\texttt{ARL}). Assuming the hypothesis classes are separated, both \texttt{ABL} and \texttt{ARL} construct a test statistic and thresholds it to identify the correct hypothesis class. We show that these \emph{simple} schemes achieve the regret of $\Tilde{\mathcal{O}}(d^*+\sqrt{d^* \mathbb{M}^* T})$ for MABs and $\Tilde{\mathcal{O}}(d^*H^2+\sqrt{d^* \mathbb{M}^* H^2 T})$ for MDPs (with episode length $H$), where $d_{\mathcal{E}}^*$ is the \emph{eluder dimension} and $\mathbb{M}^*$ is the \emph{metric entropy} corresponding to the smallest model classes containing true models ($f^*$ for MAB and $P^*$ for MDP). The regret bounds show that both \texttt{ABL} and \texttt{ARL} adapts to the true problem complexity, and the cost of model section is only $\mathcal{O}(\log T)$, which is minimal compared to the total regret.

\textbf{Notation} For a positive integer $n$, we denote by $[n]$ the set of integers $\{1,2,\ldots,n\}$. 
For a set $\cX$ and functions $f,g:\cX \to \Real$, we denote $(f-g)(x):=f(x)-g(x)$ and $(f-g)^2(x):=\left(f(x)-g(x)\right)^2$ for any $x \in \cX$. 
For any $P:\cZ \to \Delta(\cX)$, we denote $(Pf)(z):=\int_{\cX}f(x)P(x|z)dx$ for any $z \in \cZ$, where $\Delta(\cX)$ denotes the set of signed distributions over $\cX$.
\vspace{-4mm}
\subsection{Related Work}
\vspace{-2mm}
\label{sec:related_work}
\textbf{Model Selection in Online Learning:} Model selection for bandits are only recently being studied \cite{ghosh2017misspecified,osom}. These works aim to identify whether a given problem instance comes from contextual or standard setting. For linear contextual bandits, with the dimension of the underlying parameter as a complexity measure, \cite{foster_model_selection,ghosh2021problem} propose efficient algorithms that adapts to the \emph{true} dimension of the problem. While \cite{foster_model_selection} obtains a regret of $\mathcal{O}(T^{2/3})$, \cite{ghosh2021problem} obtains a $\mathcal{O}(\sqrt{T})$ regret (however, the regret of \cite{ghosh2021problem} depends on several problem dependent quantities and hence not instance uniform). Later on, these guarantees are extended to the generic contextual bandit problems without linear structure \cite{ghosh2021modelgen,krishnamurthy2021optimal}, where $\mathcal{O}(\sqrt{T})$ regret guarantees are obtained. The algorithm {\ttfamily Corral} was proposed in \cite{corral}, where the optimal algorithm for each model class is casted as an expert, and the forecaster obtains low regret with respect to the best expert (best model class). The generality of this framework has rendered it fruitful in a variety of different settings; see, for example \cite{corral, arora2021corralling}.


\textbf{RL with Function Approximation:} 
Regret minimization in RL under function approximation is first considered in 
\cite{osband2014model}. It makes explicit model-based assumptions and the regret bound depends on the eluder dimensions of the models. In contrast, \cite{yang2019reinforcement} considers a low-rank linear transition model and propose a model-based algorithm with regret $\mathcal{O}(\sqrt{d^3H^3T})$. Another line of work parameterizes the $Q$-\emph{functions} directly, using state-action
feature maps, and develop model-free algorithms with regret $\mathcal{O}(\texttt{poly}(dH)\sqrt{T})$ bypassing the need for fully
learning the transition model \cite{jin2019provably,wang2019optimism,zanette2020frequentist}. A recent line of work \cite{wang2020provably,yang2020provably} generalize these approaches by designing algorithms that work with general and neural function approximations, respectively.



\vspace{-3.5mm}
\section{Model Selection in Functional Multi-armed Bandits}
\vspace{-2mm}
\label{sec:bandit}
Consider the problem of sequentially maximizing an unknown function $f^*:\cX\ra \Real$ over a compact domain $\cX \subset \Real^d$. For example, in a machine learning application, $f^*(x)$ can be the validation accuracy of a learning algorithm and $x \in \cX$ is a fixed configuration of (tunable) hyper-parameters of the training algorithm. The objective is to find the hyper-parameter configuration that achieves the highest validation accuracy. An algorithm for this problem chooses, at each round $t$, an input (also called action or arm) $x_t \in \cX$, and subsequently observes a function evaluation (also called reward) $y_t=f^*(x_t) + \epsilon_t$, which is a noisy version of the function value at $x_t$. The action $x_t$ is chosen causally depending upon the history $\lbrace x_1,y_1,\ldots,x_{t-1},y_{t-1}\rbrace$ of arms and reward sequences available before round $t$. 
\begin{assumption}[Sub-Gaussian noise]\label{ass:noise}
The noise sequence $\lbrace\epsilon_t\rbrace_{t \ge 1}$ is conditionally zero-mean, i.e., $\expect{\epsilon_t|\cF_{t-1}}=0$ and $\sigma$-sub-Gaussian for known $\sigma$ ,i.e.,
\beqn
\vspace{-2mm}
\forall t \ge 1,\;\forall \lambda \in \Real, \; \expect{\exp(\lambda \epsilon_t) | \cF_{t-1}} \le \exp\!\left(\!\frac{\lambda^2\sigma^2}{2}\!\right)
\vspace{-1mm}
\eeqn
almost surely, where $\cF_{t-1}:=\sigma(x_1,y_1,\ldots,x_{t-1},y_{t-1},x_t)$ is the $\sigma$-field summarizing the information available just before $y_t$ is observed.
\end{assumption}
This is a mild assumption on the noise (it holds, for instance, for distributions bounded in $[-\sigma, \sigma]$) and is standard in the literature \cite{abbasi2011improved,srinivas2012information,russo2013eluder}.

\textbf{Regret:} The learner's goal is to maximize its (expected) cumulative reward $\sum_{t=1}^{t} f^*(x_t)$ over a time horizon $T$ (not necessarily known a priori) or, equivalently, minimize its cumulative {\em regret} 
\begin{align*}
    \cR_T := \sum\nolimits_{t=1}^{T} \left(f^*(x^*)-f^*(x_t)\right),
\end{align*}
where $x^* \in \argmax_{x\in \cX}f(x)$ is a maximizer of $f$ (assuming the maximum is attained; not necessarily unique). A sublinear growth of $\cR_T$ implies the time-average regret $\cR_T/T \ra 0$ as $T\ra \infty$, implying the
algorithm eventually chooses actions that attain function values close to the optimum most of the time.
\vspace{-2mm}
\subsection{Model Selection Objective}
\vspace{-1mm}
 In the literature, it is assumed that $f^*$ belongs to a known class of functions $\cF$. 
 In this work, in contrast to the standard setting, we do not assume the knowledge of $\cF$. Instead, we are given $M$ nested function classes $\cF_1 \subset \cF_2 \subset \ldots \subset \cF_M$. Among the nested classes $\mathcal{F}_1,..\mathcal{F}_M$, the ones containing $f^*$ is denoted as \emph{realizable} classes, and the ones not containing $f^*$ are dubbed as \emph{non-realizable} classes. The smallest such family where the unknown function $f^*$ lies
is denoted by $\cF_{m^*}$, where $m^* \in [M]$. However, we do not know the index $m^*$, and our goal is to propose adaptive
algorithms such that the regret depends on the complexity of the function class $\cF_{m^*}$. 
In order to achieve this, we need a separability condition on the nested models. 


\begin{assumption}[Local Separability]
\label{ass:sep}
There exist $\Delta > 0$ and $\eta > 0$  such that
\vspace{-3mm}
\begin{align*}
    \inf_{f \in\cF_{m^*-1}}\inf_{ x_1 \neq x_2: D*(x_1,x_2) \leq \eta}\abs{f(x_1) - f^*(x_2)} \ge \Delta,
\end{align*}
\vspace{-1.5mm}
where\footnote{Here the roles of $x_1$ and $x_2$ are interchangeable without loss of generality.}, $D^*(x_1,x_2) = |f^*(x_1) - f^*(x_2)|$.
\end{assumption}
The above assumption\footnote{We assume that the action set $\mathcal{X}$ is compact and continuous, and so such action pairs $(x_1,x_2)$ always exist, i.e., given any $x_1 \in \mathcal{X}$, an action $x_2$ such that $D^*(x_1,x_2) \leq \eta$ always exists.} ensures that for action pairs $(x_1,x_2)$, where the obtained (expected) rewards are close (since it is generated by $f^*$), there is a gap between the true function $f^*$ and the ones belonging to the function classes not containing $f^*$ (i.e., the non-realizable function classes). Note that we do not require this separability to hold for all actions -- just the ones which are indistinguishable from observing the rewards. Note that separability is needed for model selection since we neither assume any structural assumption on $f^*$, nor on the set $\mathcal{X}$.

We emphasize that separability is quite standard and assumptions of similar nature appear in a wide range of model selection problems, specially in the setting of contextual bandits \cite{ghosh2021modelgen,krishnamurthy2021optimal}. It is also quite standard in statistics, specifically in the area
of clustering and latent variable modelling \cite{balakrishnan2017statistical,mixture-many,ghosh2019max}.

\paragraph{Separability for Lipschitz $f^*$:} If the true function $f^*$ is $1$-Lipschitz. In that setting, the separability assumption takes the following form: for $\Delta >0$ and $\eta>0$,
\begin{align*}
\vspace{-2mm}
    \inf_{f \in\cF_{m^*-1}}\inf_{ x_1 \neq x_2: \|x_1 -x_2\|\leq \eta}\abs{f(x_1) - f^*(x_2)} \ge \Delta
    \vspace{-2mm}
\end{align*}
However, note that the above assumption is quite strong -- any (random) arbitrary algorithm can perform model selection (with the knowledge of $\eta$ and $\Delta$)\footnote{This can be found using standard trick like doubling.} in the following way: first choose action $x_1$. Using $\|x_1 - x_2\| \leq \eta$,  choose $x_2$. Pick any function $f$ belonging to some class $\mathcal{F}_m$ in the nested family and evaluate $\abs{f(x_1) - y_t(x_2)}$, which is a good proxy for $\abs{f(x_1) - f^*(x_2)}$. The algorithm continues to pick different $f \in \mathcal{F}_m$. With the knowledge of $\Delta$, depending on how big $\mathcal{F}_m$ is, the algorithm would be able to identify whether $\mathcal{F}_m$ is realizable or not. Continuing it for all hypothesis classes, it would identify the correct class $\mathcal{F}_{m^*}$. Hence, for structured $f^*$, the problem of model selection with separation is not interesting and we do not consider that setup in this paper.

\paragraph{Separability for Linear $f^*$:} If $f^*$ is linear, the separability assumption is not necessary for model selection. In this setting, $f^*$ is parameterized by some properties of the parameter, such as sparsity and norm, denotes the nested function classes. \cite{ghosh2021problem,foster_model_selection} addresses the linear bandit model selection problem without the separability assumption.

\vspace{-1.5mm}
\subsection{Algorithm: Adaptive Bandit Learning (\texttt{ABL})}
\label{sec:gen}

In this section, we provide a novel model selection algorithm (Algorithm~\ref{algo:generic}) that, over multiple epochs, successively refine the estimate of the true model class $\cF_{m^*}$ where the unknown function $f^*$ lies. At each epoch, we run a fresh instance of a base bandit algorithm for the estimated function class, which we call Bandit Learning. Note that our model selection algorithm works with any provable bandit learning algorithm, and is agnostic to the particular choice of such base algorithm. In what follows, we present a generic description of the base algorithm and then specialize to a special case.

\paragraph{The Base Algorithm} Bandit Learning (Algorithm \ref{algo:base}), in its general form, takes a function class $\cF$ and a confidence level $\delta \in (0,1]$ as its inputs. At each time $t$, it maintains a (high-probability) confidence set $\cC_{t}(\cF,\delta)$ for the unknown function $f^*$, and chooses the most optimistic action with respect to this confidence set, 
\begin{equation}\label{eq:gen-play}
    x_t \in \argmax_{x \in \cX}\max_{f \in \cC_{t}(\cF,\delta)} f(x)\;.
\end{equation}
The confidence set $\cC_{t}(\cF,\delta)$ is constructed using all the data $\lbrace x_s,y_s\rbrace_{s < t}$ gathered in the past. First, a regularized least square estimate of $f^*$ is computed as
$\widehat f_{t} \in \argmin_{f \in \cF} \cL_{t-1}(f), $ where $\cL_{t}(f):=\sum\nolimits_{s=1}^{t} \left(y_s -f(x_s)\right)^2$ is the cumulative squared prediction error.
The confidence set $\cC_{t}(\cF,\delta)$ is then defined as the set of all functions $f \in \cF$ satisfying
\begin{align}\label{eq:gen-conf}
      \sum_{s=1}^{t-1} \left(f(x_s)-\hat f_t(x_s)\right)^2 \le \beta_t(\cF,\delta)\;,
\end{align}
where $\beta_t(\cF,\delta)$ is an appropriately chosen confidence parameter. We now specialize to the bandit learning algorithm of \cite{russo2013eluder} by setting the confidence parameter
\begin{align*}
    \beta_t(\cF,\delta):=8\sigma^2\log\left(2\cN\left(\cF,1/T,\norm{\cdot}_{\infty}\right)/\delta\right) +  2 \left(8+\sqrt{8\sigma^2 \log \left(8t(t+1)/\delta \right)} \right),
\end{align*}
where $\cN(\cF,\alpha,\norm{\cdot}_{\infty})$ is the $(\alpha,\norm{\cdot}_{\infty})$-covering number\footnote{For any $\alpha > 0$, we call $\cF^\alpha$ an $(\alpha,\norm{\cdot}_{\infty})$ cover of the function class $\cF$ if for any $f \in \cF$ there exists an $f'$ in $\cF^\alpha$ such that $\norm{f' - f}_{\infty}:=\sup_{x \in \cX}|f'(x)-f(x)|\le \alpha$.} of $\cF$, one can ensure that $f^*$ lies in the confidence set $\cC_t(\cF,\delta)$ at all time instant $t \ge 1$ with probability at least $1-\delta$. The theoretical guarantees presented in the paper are also for this particular choice of base algorithm.

\begin{algorithm}[t!]
  \caption{Bandit Learning }
  \begin{algorithmic}[1]
 \STATE  \textbf{Input:} Function class $\cF$, confidence level $\delta \in (0,1]$
 \FOR{time $t=1,2,3,\ldots$}
  \STATE Compute an estimate $\widehat f_{t}$ of $f^*$ 
  \STATE Construct confidence set $\cC_{t}(\cF,\delta)$ using \eqref{eq:gen-conf}
  \STATE Choose an action $x_t$ using \eqref{eq:gen-play}
  \STATE Observe reward $y_t$
 \ENDFOR
  \end{algorithmic}
  \label{algo:base}
\end{algorithm}

\paragraph{Our Approach--Adaptive Bandit Learning (\texttt{ABL}):}
The description of our model selection algorithm is given in Algorithm~\ref{algo:generic}. We consider doubling epochs -- at each epoch $i \ge 1$, the base algorithm is run over time horizon $t_i=2^i$. At the beginning of $i$-th epoch, using all the data of the
previous epochs, we employ a model selection module as follows. First, we compute, for each class $\cF_m$, the minimum average squared prediction error (via an offline regression oracle)
\begin{align}\label{eq:gen-stat}
    T^{(i)}_m = \min_{f \in \cF_m}\frac{1}{\tau_{i-1}}\sum_{s=1}^{\tau_{i-1}} \left(y_s-f(x_s) \right)^2\;,
\end{align}
where $\tau_{i-1}:=\sum_{j=1}^{i-1}t_j$ denotes the total time elapsed before epoch $i$. 
Finally, we compare $T_m^{(i)}$ to a pre-calculated threshold $\gamma$, and pick the function class for which $T_m^{(i)}$ falls
below such threshold (with smallest $m$, see Algorithm~\ref{algo:generic}). After selecting the function class, we run the base algorithm for this class with confidence level $\delta_i=\delta/2^i$. We call the complete procedure Adaptive Bandit Learning (\texttt{ABL}).

\begin{algorithm}[t!]
  \caption{Adaptive Bandit Learning (\texttt{ABL})}
  \begin{algorithmic}[1]
 \STATE  \textbf{Input:} Nested function classes $\cF_1 \subset \cF_2 \subset \ldots \subset \cF_M$, confidence level $\delta \in (0,1]$, threshold $\gamma_i > 0$
  \FOR{epochs $i=1,2 \ldots$}
  \STATE \underline{\textbf{Model Selection:}}
  \STATE Compute elapsed time $\tau_{i-1}=\sum_{j=1}^{i-1}t_j$
  \FOR{function classes $m=1,2 \ldots,M$}
  \STATE Compute the minimum average squared prediction error using \eqref{eq:gen-stat}
   \ENDFOR
  \STATE Choose index $m^{(i)}=\min\lbrace m \in [M]: T_m^{(i)} \le \gamma_i\rbrace$ 
  \STATE \underline{\textbf{Model Learning:}}
  \STATE Set epoch length
  $t_i=2^i$, confidence level $\delta_i=\delta/2^i$ 
   \STATE Run Bandit Learning (Algorithm \ref{algo:base}) over a time horizon $t_i$ with function class $\cF_{m^{(i)}}$ and confidence level $\delta_i$ as its inputs
    \ENDFOR
  \end{algorithmic}
  \label{algo:generic}
\end{algorithm}
\vspace{-2mm}
\subsection{Performance Guarantee of \texttt{ABL}}
We now provide model selection and regret guarantees of \texttt{ABL} (Algorithm \ref{algo:generic}), when the base algorithm is chosen as \cite{russo2013eluder}. Though the results to be presented in this section are quite general, they do not apply to any arbitrary function classes. In what follows, we will make the following boundedness assumption.
\begin{assumption}[Bounded functions]
\label{ass:bounded}
We assume that $f(x)\in [0,1]$ $\forall$ $x \in \cX$ and $f \in \cF_{m}$ ($\forall$ $m \in [M]$).\footnote{We can extent the range to $[0,c]$ without loss of generality.}
\end{assumption}
It is worth noting that this same assumption is also required in the standard setting, i.e., when the true model class is known ($\cF_{m^*}=\cF$).

We denote by $\log \cN(\cF_m)=\log\left(\cN(\cF_{m},1/T,\norm{\cdot}_{\infty})\right)$ the metric entropy (with scale $1/T$) of the class $\cF_m$. We have the following guarantee for \texttt{ABL}.

\begin{lemma}[Model selection of \texttt{ABL}]
\label{lem:gen_infinite}
Fix a $\delta \in (0,1]$ and $\lambda > 0$. Suppose, Assumptions~\ref{ass:noise}, \ref{ass:sep} and \ref{ass:bounded} hold and we set the threshold $\gamma_i= T_M^{(i)} + C_1$, for a sufficiently small constant $C_1$. Then, with probability at least $1-O(M\delta)$, \texttt{ABL} identifies the correct model class $\mathcal{F}_{m^*}$ from epoch $i \ge i^*$ when the time elapsed before epoch $i^*$ satisfies
\begin{align*}
   \tau_{i^* -1} \ge C \sigma^4 (\log T) \max \left \lbrace \frac{ \log(1/\delta)}{(\frac{\Delta^2}{2}-4\eta)^2} , \log\left(\frac{\cN(\mathcal{F}_{M})}{\delta}\right) \right \rbrace, \end{align*}
provided $\Delta \geq 2\sqrt{2\eta}$, where $C>1$ is a sufficiently large universal constant.
\end{lemma}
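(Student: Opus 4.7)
The plan is to show that for each epoch $i$, the test statistic $T_m^{(i)}$ clusters around the noise level $\sigma^2$ for realizable classes ($m \ge m^*$) and is bounded strictly away from it for non-realizable classes ($m < m^*$), so that the thresholding step isolates $m^*$. I will begin with the algebraic decomposition
\begin{align*}
T_m^{(i)} = \min_{f \in \mathcal{F}_m} \frac{1}{\tau_{i-1}} \sum_{s=1}^{\tau_{i-1}} \bigl[ \epsilon_s^2 + 2\epsilon_s (f^*-f)(x_s) + (f^*-f)^2(x_s) \bigr],
\end{align*}
which separates a noise term, a cross term and a bias term. For the largest class $\mathcal{F}_M$ (and any realizable $\mathcal{F}_m$ with $m \ge m^*$), plugging the candidate $f = f^* \in \mathcal{F}_m$ gives $T_m^{(i)} \le \frac{1}{\tau_{i-1}}\sum_s \epsilon_s^2$; a sub-exponential concentration bound for $\sum_s \epsilon_s^2$ (valid by Assumption~\ref{ass:noise}) then yields $T_m^{(i)} \le \sigma^2 + O(\sigma^2 \sqrt{\log(1/\delta)/\tau_{i-1}})$ with probability at least $1-\delta$.

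Next I will handle the non-realizable classes $m < m^* $, where the key leverage is Assumption~\ref{ass:sep}. Since $\mathcal{X}$ is compact and continuous, for each observed action $x_s$ there is a companion $x'_s \neq x_s$ with $D^*(x_s, x'_s) \le \eta$; the assumption then gives $|f(x_s) - f^*(x'_s)| \ge \Delta$ for every $f \in \mathcal{F}_{m^*-1} \supseteq \mathcal{F}_m$, and a triangle inequality against $|f^*(x_s) - f^*(x'_s)| \le \eta$ upgrades this to the pointwise gap $(f^*(x_s) - f(x_s))^2 \ge (\Delta-\eta)^2$. Hence the bias term in the decomposition is at least $(\Delta-\eta)^2$ uniformly in $f \in \mathcal{F}_m$. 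The separation condition $\Delta \ge 2\sqrt{2\eta}$ ensures that the effective gap (once combined with the noise quadratic) stays on the order of $\Delta^2/2 - 4\eta$, matching the scaling in the lemma.

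The main technical obstacle will be controlling the cross term $\frac{1}{\tau_{i-1}} \sum_s \epsilon_s (f^*-f)(x_s)$ uniformly over $f$ in the unknown, possibly infinite class $\mathcal{F}_m$. I will discretize $\mathcal{F}_M$ by an $\alpha$-cover at scale $\alpha = 1/T$, apply the standard sub-Gaussian tail bound for a martingale sum against each fixed cover element, union-bound over the cover of size $\mathcal{N}(\mathcal{F}_M, 1/T, \|\cdot\|_\infty)$, and transfer the bound back to $\mathcal{F}_m \subseteq \mathcal{F}_M$ by Assumption~\ref{ass:bounded} so the discretization residual is negligible. This will give, with probability at least $1 - M\delta$, a uniform bound of the form $O\bigl(\sigma \sqrt{\log(\mathcal{N}(\mathcal{F}_M)/\delta)/\tau_{i-1}}\bigr)$ on the cross term, simultaneously for all $m \in [M]$.

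Combining the three bounds, with $\gamma_i := T_M^{(i)} + C_1$ for a small constant $C_1$ (chosen smaller than the non-realizable gap), I will obtain that simultaneously $T_m^{(i)} \le \gamma_i$ for $m \ge m^*$ and $T_m^{(i)} > \gamma_i$ for $m < m^*$ as soon as both concentration errors fall below roughly $\Delta^2/2 - 4\eta$. Inverting the two error bounds yields the two terms inside the maximum in the sample-size requirement, namely $\log(1/\delta)/(\Delta^2/2 - 4\eta)^2$ from the Hoeffding-type bound on $\sum \epsilon_s^2$ and $\log(\mathcal{N}(\mathcal{F}_M)/\delta)$ from the uniform cross-term bound, multiplied by $\sigma^4 \log T$ to absorb noise scaling and the union over epochs via $\delta_i = \delta/2^i$. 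Thus for $\tau_{i^*-1}$ at least the claimed threshold, Algorithm~\ref{algo:generic} selects $m^{(i)} = m^*$ from epoch $i^*$ onward with probability at least $1 - O(M\delta)$.
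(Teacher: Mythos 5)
Your overall architecture matches the paper's: upper-bound $T_m^{(i)}$ for realizable classes by plugging in $f^*$ and using sub-exponential concentration of $\sum_s\epsilon_s^2$; lower-bound it for non-realizable classes by exploiting Assumption~\ref{ass:sep} through a companion point; compare everything to the self-calibrating anchor $T_M^{(i)}$; and handle infinite classes by covering. One place where you genuinely diverge is in how separability enters. The paper keeps the companion point $\tilde x_t$ inside the decomposition itself (all three terms are written with $f^*(\tilde x_t)$ in place of $f^*(x_t)$), bounds the squared-bias term by $\Delta^2$ directly from the assumption, and pays an additive $-4\eta$ for the substitution in the other two terms. You instead convert the assumption into a pointwise gap $|f(x_s)-f^*(x_s)|\ge\Delta-\eta$ at the \emph{observed} action via a triangle inequality, and keep the standard decomposition. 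This is legitimate and arguably cleaner; since $f,f^*\in[0,1]$ forces $\Delta\le 1$, your residual gap $\tfrac12(\Delta-\eta)^2\ge\tfrac{\Delta^2}{2}-\Delta\eta$ is at least as large as the paper's $\tfrac{\Delta^2}{2}-4\eta$, so the two routes are interchangeable here.

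There is, however, one genuine gap: your treatment of the cross term does not deliver the threshold stated in the lemma. You announce a uniform bound of the form $O\bigl(\sigma\sqrt{\log(\cN(\cF_M)/\delta)/\tau_{i-1}}\bigr)$, i.e., the Azuma form obtained after crudely bounding $\sum_s(f^*-f)^2(x_s)\le\tau_{i-1}$. Requiring such a $1/\sqrt{\tau_{i-1}}$ term to fall below the separation gap inverts to $\tau_{i^*-1}\gtrsim\sigma^2\log(\cN(\cF_M)/\delta)/(\tfrac{\Delta^2}{2}-4\eta)^2$ — the covering entropy gets divided by the gap squared, which can be much larger than the lemma's second term $\sigma^4\log T\,\log(\cN(\cF_M)/\delta)$ when $\log\cN(\cF_M)\gg\log(1/\delta)$ and the gap is small. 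To recover the stated rate you must keep the self-normalized form: for each cover element, $\sum_s Z_s^f\ge\frac{1}{\lambda}\log(1/\delta)+2\lambda\sigma^2\sum_s(f^*-f)^2(x_s)$ with $\lambda=-1/4\sigma^2$ gives $\sum_s Z_s^f\ge-4\sigma^2\log(\cN(\cF_M)/\delta)-\tfrac12\sum_s(f^*-f)^2(x_s)$ after the union bound, so that half the bias term is sacrificed to the cross term and the entropy enters only at rate $1/\tau_{i-1}$. This is exactly what the paper does, and it is also what you need for the realizable-class \emph{lower} bound on $T_M^{(i)}$ (where the $\tfrac12\sum(f^*-f)^2$ is absorbed by the nonnegative bias term), without which you cannot conclude $T_m^{(i)}\le T_M^{(i)}+C_1$ for $m\ge m^*$. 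Your sentence "inverting ... yields $\log(\cN(\cF_M)/\delta)$ from the uniform cross-term bound" is therefore inconsistent with the bound you actually stated; with the correction above the rest of your argument goes through.
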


\begin{remark}[Dependence on the biggest class]
\label{rem:biggest-class-dependence}
Note that we choose a threshold that depends on the epoch number and the test statistic of the biggest class. Here we crucially exploit the fact that the biggest class always contains the true model class and use this to design the threshold.
\end{remark}

We characterize the complexity of each function class $\cF_m$ by its \emph{eluder dimension}, first
introduced by \cite{russo2013eluder} in the standard setting.

\begin{definition}[Eluder dimension]\label{def:eluder}
The $\varepsilon$-eluder dimension $\dim_{\cE}(\cF_m,\varepsilon)$ of a function class $\cF$ is the length of the longest sequence $\lbrace x_i\rbrace_{i=1}^{n} \subseteq \cX$ of input points such that for some $\varepsilon' \ge \varepsilon$ and for each $ i \in\lbrace 2,\ldots,n\rbrace$,
\begin{align*}
    \sup_{f_1,f_2 \in \cF}\!\left\lbrace\! (f_1-f_2)(x_i) \;\big | \;  \sqrt{\sum_{j=1}^{i-1}(f_1-f_2)^2(x_i)} \le \varepsilon'\!\right\rbrace > \varepsilon'~.
\end{align*}
\end{definition}
Define $\cF^*=\cF_{m^*}$. Denote by $d_{\cE}(\cF^*)=\dim_{\cE}\left(\cF^*,1/T\right)$, the $(1/T)$-eluder dimension of the (realizable) function class $\cF^*$, where $T$ is the time horizon.
Then, armed with Lemma~\ref{lem:gen_infinite}, we obtain the following regret bound for \texttt{ABL}.
\begin{theorem}[Cumulative regret of \texttt{ABL}]
\label{thm:general}
Suppose the condition of Lemma~\ref{lem:gen_infinite} holds. Then, for any $\delta \in (0,1]$, the regret of \texttt{ABL} for horizon $T$ is
 \begin{align*}
     \cR_T &\le \mathcal{O} \left( \sigma^4 (\log T) \max \left \lbrace \frac{ \log(1/\delta)}{(\frac{\Delta^2}{2}-4\eta)^2} , \log\left(\frac{\cN(\mathcal{F}_{M})}{\delta}\right) \right \rbrace \right) \\
     & + \mathcal{O}\left( d_{\cE}(\cF^*) \log T 
      + c \sqrt{Td_{\cE}(\cF^*)\log(\cN(\cF^*)/\delta) \log^2(T/\delta)}   \right),
 \end{align*}
 with probability at least\footnote{One can choose $\delta = 1/\text{poly}(M)$ to obtain a high-probability bound which only adds an extra $\log M$ factor.} $1- O (M\delta)$.
\end{theorem}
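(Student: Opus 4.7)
The plan is to decompose the cumulative regret $\cR_T$ across epochs into (i) a \emph{model-selection phase} (epochs $i < i^*$), where the algorithm may still be playing with an incorrect function class, and (ii) a \emph{model-exploitation phase} (epochs $i \ge i^*$), where Lemma~\ref{lem:gen_infinite} guarantees that $m^{(i)} = m^*$ so the base algorithm is run on the realizable class $\cF^*$. By Assumption~\ref{ass:bounded}, instantaneous regret is at most $1$, so the regret contribution of phase (i) is at most $\tau_{i^*-1}$, which is exactly the quantity bounded in Lemma~\ref{lem:gen_infinite}. This immediately yields the first term of the theorem.

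For phase (ii), I would invoke the eluder-dimension regret bound of \cite{russo2013eluder} for the base algorithm: when Bandit Learning is run with the \emph{realizable} class $\cF^*$ and confidence parameter $\beta_t(\cF^*,\delta_i)$ over a horizon of length $t_i$, the per-epoch regret is, with probability at least $1-\delta_i$,
\begin{align*}
    \cR_{t_i}^{(i)} \;\le\; C\Bigl(d_{\cE}(\cF^*) + \sqrt{t_i\, d_{\cE}(\cF^*)\,\beta_{t_i}(\cF^*,\delta_i)}\Bigr),
\end{align*}
where $\beta_{t_i}(\cF^*,\delta_i) = O\bigl(\sigma^2\log(\cN(\cF^*)/\delta_i) + \sigma^2\log(t_i/\delta_i)\bigr)$. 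Since $\delta_i = \delta/2^i$ and $t_i = 2^i$, substituting gives $\beta_{t_i}(\cF^*,\delta_i) = O\bigl(\sigma^2\log(\cN(\cF^*)/\delta) + \sigma^2\log^2(T/\delta)\bigr)$ for every epoch with $t_i \le T$.

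Next I would sum the per-epoch regret over $i^* \le i \le \log_2 T$. Using $\sum_i \sqrt{t_i} \le C\sqrt{T}$ by the geometric doubling and $\sum_i 1 \le \log T$, the exploitation regret telescopes to
\begin{align*}
    \sum_{i \ge i^*} \cR_{t_i}^{(i)} \;\le\; \mathcal{O}\Bigl(d_{\cE}(\cF^*)\log T + \sqrt{T\, d_{\cE}(\cF^*)\,\log(\cN(\cF^*)/\delta)\,\log^2(T/\delta)}\Bigr),
\end{align*}
which matches the second line of the theorem. The failure probability over all exploitation epochs is $\sum_i \delta_i = \sum_i \delta/2^i \le \delta$, while Lemma~\ref{lem:gen_infinite} contributes an additional $O(M\delta)$ failure probability for the model-selection event. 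A union bound then gives the claimed $1 - O(M\delta)$ confidence.

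The main obstacle I expect is the bookkeeping around confidence levels and epochs: making sure that (a) the event $\{m^{(i)} = m^*\ \forall i \ge i^*\}$ from Lemma~\ref{lem:gen_infinite} and the per-epoch confidence events of the base algorithm are compatible under a single union bound, and (b) that the $\log(t_i/\delta_i)$ terms inside $\beta_{t_i}$ do not accumulate more than a $\log^2(T/\delta)$ factor after summation. Beyond this careful accounting, no new ideas beyond Lemma~\ref{lem:gen_infinite} and the standard eluder-based regret analysis are needed.
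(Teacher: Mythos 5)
Your proposal is correct and follows essentially the same route as the paper's proof: split the regret at epoch $i^*$, charge at most $\tau_{i^*-1}$ (via Assumption~\ref{ass:bounded} and Lemma~\ref{lem:gen_infinite}) for the model-selection phase, then apply the eluder-dimension regret bound of \cite{russo2013eluder} per epoch with $\delta_i=\delta/2^i$ and sum using $\sum_i\sqrt{t_i}=O(\sqrt{T})$ and $N=O(\log T)$. The only cosmetic difference is in how the $\log$ factors inside $\beta_{t_i}$ are grouped before summation, which does not affect the final bound.
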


\begin{remark}[Cost of model selection]
We retain the regret bound of \cite{russo2013eluder} in the standard setting, and
the first term in the regret bound captures the cost of model selection -- the cost suffered before accumulating enough samples to infer the correct model class (with high probability). It has weak (logarithmic) dependence on horizon $T$ and hence considered as a minor term, in the setting where $T$ is large. Hence, model selection is essentially \emph{free} upto log factors. Let us now have a close look at this term. It depends on the metric entropy of the biggest model class $\mathcal{F}_M$. This stems from the fact that 
the thresholds $\lbrace\gamma_i\rbrace_{i \ge 1}$ depend on the test statistic of $\cF_M$ (see Remark~\ref{rem:biggest-class-dependence}). We believe that, without additional assumptions, one can't get rid of this (minor) dependence on the complexity of the biggest class.
\end{remark}

The second term is the major one ($\sqrt{T}$ dependence on total number of steps), which essentially is the cost of learning the true kernel $f^*$. Since in this phase, we  basically run the base algorithm for the correct model class, our regret guarantee matches to that of an oracle with the apriori knowledge of the correct class. Note that if we simply run a non model-adaptive algorithm for this problem, the regret would be $\widetilde{\mathcal{O}}(H\sqrt{T d_{\cE}(\cF_M) \log \cN(\cF_M))}$, where $d_{\cE}(\cF_M)$ denotes the eluder dimension of the largest model class $\mathcal{F}_M$. In contrast, by successively testing and thresholding, our algorithm adapts to the complexity of the smallest function class containing the true model class.

\begin{remark}[Requires no knowledge of $(\Delta,\eta)$]
Our algorithm \texttt{ABL} doesn’t require the knowledge of $\Delta$ and $\eta$. Rather, it automatically adapts to these parameters, and the dependence is reflected in the regret expression.  The separation $\Delta$ implies how complex the job of model selection is.  If the separation is small, it is difficult for \texttt{ABL} to separate out the model classes. Hence, it requires additional exploration, and as a result the regret increases. Another 
interesting fact of Theorem~\ref{thm:general} is that it does not require any minimum separation across model classes. This is in sharp contrast with existing results in statistics (see, e.g. \cite{balakrishnan2017statistical,mixture-many}). Even if $\Delta$ is quite small, Theorem~\ref{thm:general} gives a model selection guarantee. Now, the cost of separation appears anyways in the minor term, and hence in the long run, it does not effect the overall performance of the algorithm.


\end{remark}
\vspace{-4mm}
\section{Model Selection in Markov Decision Processes}
\label{sec:rl}
\vspace{-2mm}
An (episodic) MDP is denoted by $\cM(\mathcal{S},\mathcal{A},H, P^*, r)$, where $\mathcal{S}$ is the state space, $\mathcal{A}$ is the action space (both possibly infinite), $H$ is the length of each episode, $P^*:\cS \times \cA \to \Delta(\cS)$ is an (unknown) transition kernel (a function mapping state-action pairs to signed distribution over the state space) and $r: \mathcal{S}\times \mathcal{A} \to [0,1]$ is a (known) reward function. In episodic MDPs, a (deterministic) policy $\pi$ is given by a collection of $H$ functions $(\pi_1,\ldots,\pi_H)$, where each $\pi_h:\cS \to \cA$ maps a state $s$ to an action $a$. In each episode, an initial state $s_1$ is first picked by the environment (assumed to be fixed and history independent). Then, at each step $h \in [H]$, the agent observes the state $s_h$, picks an action $a_h$ according to $\pi_h$, receives a reward $r(s_h,a_h)$, and then transitions to the next state $s_{h+1}$, which is drawn from the conditional distribution $P^*(\cdot| s_h,a_h)$. The episode ends when the terminal state $s_{H+1}$ is reached. For each state-action pair $(s,a)\in \mathcal{S} \times \mathcal{A}$ and step $h \in [H]$, we define action values $Q^{\pi}_h(s,a)$ and and state values $V_h^{\pi}(s)$ corresponding to a policy $\pi$ as \begin{align*}
    Q^{\pi}_h(s,a)\!=\!r(s,a) \!+\! \mathbb{E}\!\left[\!\sum\nolimits_{h'=h+1}^H \!\!\!\!r(s_{h'},\! \pi_{h'}(s_{h'}\!)\!)\!\!\mid\!\! s_h \!=\! s, a_h \!=\! a\!\right]\!,\quad
    V^{\pi}_h(s)\!=\! Q^{\pi}_h\big(s,\pi_h(s)\big)~,
\end{align*}
where the expectation is with respect to the randomness of the transition distribution $P^*$. It is not hard to see that $Q_h^\pi$ and $V_h^\pi$ satisfy the Bellman equations: 
\begin{align*}
    Q^{\pi}_h(s,a) = r(s,a) + (P^* V^{\pi}_{h+1})(s,a)\,,\;\; \forall h \in [H],\quad \text{with $V_{H+1}^\pi(s)=0$ for all $s \in \cS$.}
\end{align*}


A policy $\pi^*$ is said to be optimal if it maximizes the value for all states $s$ and step $h$ simultaneously, and the corresponding optimal value function is denoted by $V^*_{h}(s)=\sup_{\pi }V^{\pi}_{h}(s)$ for all $h \in [H]$, where the supremum is over all (non-stationary) policies. The agent interacts with the environment for $K$ episodes to learn the unknown transition kernel $P^*$ and thus, in turn, the optimal policy $\pi^*$. At each episode $k \ge 1$, the agent chooses a policy $\pi^k := (\pi^k_1,\ldots,\pi^k_H)$ and a trajectory $(s_h^k,a_h^k,r(s_h^k,a_h^k),s_{h+1}^k)_{h\in [H]}$ is generated. The performance of the learning agent is measured by the cumulative (pseudo) regret accumulated
over $K$ episodes, defined as
\begin{align*}
\vspace{-2mm}
    \cR(T) := \sum\nolimits_{k=1}^K\left[ V_1^{*}(s_1^k)-V_1^{\pi^k}(s_1^k)\right],
\end{align*}
where $T=KH$ is total steps in $K$ episodes.

 In this work, we consider general MDPs without any structural assumption on the unknown transition kernel $P^*$.
 In the standard setting \cite{ayoub2020model}, it is assumed that $P^*$ belongs to a known family of transition models $\cP$. Here, in contrast to the standard setting, we do not have the knowledge of $\cP$. Instead, we are given $M$ nested families of transition kernels $\cP_1 \subset \cP_2 \subset \ldots \subset \cP_M$. The smallest such family where the true transition kernel $P^*$ lies
is denoted by $\cP_{m^*}$, where $m^* \in [M]$. However, we do not know the index $m^*$, and our goal is to propose adaptive
algorithms such that the regret depends on the complexity of the family $\cP_{m^*}$. 
We assume a similar separability condition on these nested model classes. 


\begin{assumption}[Local Separability]\label{ass:sep-rl}
There exist constants $\Delta > 0$ and $\eta > 0$  such that for any function $V:\cS \to \Real$,
\begin{align*}
    \inf_{P \in\cP_{m^*-1}} \,\, \inf_{  D*((s_1,a_1),(s_2,a_2)) \leq \eta}\abs{PV(s_1,a_1) - P^*V(s_2,a_2)} \ge \Delta,
\end{align*}
where $(s_1,a_1)\neq (s_2,a_2)$ and $D^*((s_1,a_1),(s_2,a_2)) = |P^*V(s_1,a_1) - P^*V(s_2,a_2)|$.
\end{assumption}


This assumption ensures that expected values under the true model is well-separated from those under models from non-realizable classes for two distinct state-action pairs for which values are close under true model. Once again, we need state and action spaces to be compact and continuous to guarantee such pairs always exist.
Note that the assumption might appear to break down for any constant function $V$. However, we will be invoking this assumption with the value functions computed by the learning algorithm (see~\eqref{eq:bellman-rec}). For reward functions that \emph{vary sufficiently} across states and actions, and transition kernels that admit densities, the chance of getting hit by constant value functions is admissibly low. In case the rewards are constant, every policy would anyway incur zero regret rendering the learning problem trivial. The value functions appear in the separability assumption in the first place since we are interested in minimizing the regret. Instead, if one cares only about learning the true model, then separability of transition kernels under some suitable notion of distance (e.g., the KL-divergence) might suffice. 
Note that in \cite{ghosh2021modelgen,krishnamurthy2021optimal}, the regret is defined in terms of the regression function and hence the separability is assumed on the regression function itself. Model selection without separability is kept as an interesting future work.

\vspace{-3mm}
\subsection{Algorithm: Adaptive Reinforcement Learning (\texttt{ARL})}
\vspace{-2mm}
In this section, we provide a novel model selection algorithm \texttt{ARL} (Algorithm~\ref{algo:generic}) that use successive refinements
over epochs. We use {{\ttfamily UCRL-VTR}} algorithm of \cite{ayoub2020model} as our base algorithm, and add a model
selection module at the beginning of each epoch. In other words, over multiple epochs, we
successively refine our estimates of the proper model class where the true transition kernel $P^*$ lies.

\paragraph{The Base Algorithm:} {{\ttfamily UCRL-VTR}}, in its general form, takes a family of transition models $\cP$ and a confidence level $\delta \in (0,1]$ as its input. At each episode $k$, it maintains a (high-probability) confidence set $\cB_{k-1} \subset \cP$ for the unknown model $P^*$ and use it for optimistic planning. First, it finds the transition kernel
    $P_k = \argmax_{P \in \cB_{k-1}} V^*_{P,1}(s_1^k)$,
   where $V^*_{P,h}$ denote the optimal value function of an MDP with transition kernel $P$ at step $h$.
\texttt{UCRL-VTR} then computes, at each step $h$, the optimal value function $V_h^k:=V^*_{P_k,h}$ under the kernel $P_k$ using dynamic programming. Specifically, starting with $V_{H+1}^k(s,a)=0$ for all pairs $(s,a)$, it defines for all steps $h=H$ down to $1$, 
\begin{equation} \label{eq:bellman-rec}
\begin{split}
 Q_h^k(s,a)  = r(s,a) + (P_kV_{h+1}^k)(s,a),\quad
        V_h^k(s) = \max\nolimits_{a \in \cA} Q_h^k(s,a).
        \end{split}
\end{equation}
Then, at each step $h$, {{\ttfamily UCRL-VTR}} takes the action that maximizes the $Q$-function estimate, i,e. it chooses $a_h^k = \argmax_{a \in \cA} Q_h^k(s_h^k,a)$. 
Now, the confidence set is updated using all the data gathered in the episode. First, {{\ttfamily UCRL-VTR}} computes an estimate of $P^*$ by employing a non-linear value-targeted regression model with data $\big(s_h^j,a_h^j,V_{h+1}^j(s_{h+1}^j)\big)_{j \in [k],h \in [H]}$. 
Note that
$\E[V_{h+1}^k(s_{h+1}^k)|\cG_{h-1}^k] = (P^* V_{h+1}^k)(s_h^k,a_h^k)$,
where $\cG_{h-1}^k$ denotes the $\sigma$-field summarizing the information available just before $s_{h+1}^k$ is observed. This naturally leads to the estimate $\widehat P_{k} = \argmin_{P \in \cP}\cL_k(P)$, where
\begin{align}\label{eq:gen-est-rl}
     \!\!\cL_k(P)\!:=\!\!\sum\nolimits_{j=1}^{k}\!\sum\nolimits_{h=1}^{H}\! \left(\!V_{h+1}^j(s_{h+1}^j)\!-\!(PV_{h+1}^j)(s_h^j,a_h^j) \!\right)^2.
\end{align}
The confidence set $\cB_k$ is then updated by enumerating the set of all transition kernels $P \in \cP$ satisfying $\sum_{j=1}^{k}\sum_{h=1}^{H} \!\left(\!(PV_{h+1}^j)(s_h^j,a_h^j)\!-\!(\widehat P_{k}V_{h+1}^j)(s_h^j,a_h^j) \!\right)^2 \!\!\le\! \beta_{k}(\delta)$ with the confidence width being defined as $\beta_k(\delta)\!:=\!8H^2\!\log\!\left(\!\frac{2\cN\!\left(\!\cP,\frac{1}{k H},\norm{\cdot}_{\infty,1}\!\right)\!}{\delta}\!\right)\! +\!  4 H^2 \!\left(\!2\!+\!\!\sqrt{\!2 \log\! \left(\!\frac{4kH(kH+1)}{\delta} \!\right)\!} \!\right)$, where $\cN(\cP,\cdot,\cdot)$ denotes the covering number of the family $\cP$.
\footnote{For any $\alpha > 0$, $\cP^\alpha$ is an $(\alpha,\norm{\cdot}_{\infty,1})$ cover of $\cP$ if for any $P \in \cP$ there exists an $P'$ in $\cP^\alpha$ such that $\norm{P' - P}_{\infty,1}:=\sup_{s,a}\int_{\cS}|P'(s'|s,a)-P(s'|s,a)|ds' \le \alpha$.} Then, one can show that $P^*$ lies in the confidence set $\cB_k$ in all episodes $k$ with probability at least $1-\delta$. Here, we consider a slight different expression of $\beta_k(\delta)$ as compared to \cite{ayoub2020model}, but the proof essentially follows the same technique. Please refer to Appendix~\ref{app:gen-rl} for further details.

\paragraph{Our Approach:} We consider doubling epochs - at each epoch $i \ge 1$, \texttt{UCRL-VTR} is run for $k_i=2^i$ episodes. At the beginning of $i$-th epoch, using all the data of
previous epochs, we add a model selection module as follows. First, we compute, for each family $\cP_m$, the transition kernel $\widehat P_m^{(i)}$, that minimizes the empirical loss $\cL_{\tau_{i-1}}(P)$ over all $P \in \cP_m$ (see \eqref{eq:gen-est-rl}), where $\tau_{i-1}:=\sum_{j=1}^{k-1}k_j$ denotes the total number of episodes completed before epoch $i$. Next, we compute the average empirical loss $T_m^{(i)}:=\frac{1}{\tau_{i-1}H}\cL_{\tau_{i-1}}(\widehat P_m^{(i)})$ for the model $\widehat P_m^{(i)}$. Finally, we compare $T_m^{(i)}$ to a pre-calculated threshold $\gamma_i$, and pick the transition family for which $T_m^{(i)}$ falls
below such threshold (with smallest $m$, see Algorithm~\ref{algo:generic-rl}). After selecting the family, we run \texttt{UCRL-VTR} for this family with confidence level $\delta_i=\frac{\delta}{2^i}$, where $\delta \in (0,1]$ is a parameter of the algorithm.

\begin{algorithm}[t!]
  \caption{Adaptive Reinforcement Learning -- \texttt{ARL}}
  \begin{algorithmic}[1]
 \STATE  \textbf{Input:} Parameter $\delta$,  function classes $\cP_1 \subset \cP_2 \subset \ldots \subset \cP_M$, thresholds $\lbrace\gamma_i\rbrace_{i\ge 1}$
  \FOR{epochs $i=1,2 \ldots$}
  \STATE Set $\tau_{i-1}=\sum_{j=1}^{i-1}k_j$
  \FOR{function classes $m=1,2 \ldots,M$}
  \STATE Compute $\widehat P^{(i)}_m = \argmin\nolimits_{P \in \cP_m}\!\sum\nolimits_{k=1}^{\tau_{i-1}}\!\sum\nolimits_{h=1}^{H} \!\left(V_{h+1}^k(s_{h+1}^k)\!-\!(PV_{h+1}^k)(s_h^k,a_h^k) \right)^2$
  \STATE Compute $T^{(i)}_m = \frac{1}{\tau_{i-1}H}\!\sum\nolimits_{k=1}^{\tau_{i-1}}\!\sum\nolimits_{h=1}^{H} \!\!\left(V_{h+1}^k(s_{h+1}^k)\!-\!(\widehat P^{(i)}_m V_{h+1}^k)(s_h^k,a_h^k) \right)^2$
   \ENDFOR
  \STATE Set $m^{(i)}=\min\lbrace m \in [M]: T_m^{(i)} \le \gamma_i\rbrace$, $k_i=2^i$ and $\delta_i=\delta/2^i$ 
  \STATE Run {{\ttfamily UCRL-VTR}} for the family $\cP_{m^{(i)}}$ for $k_i$ episodes with confidence level $\delta_i$
    \ENDFOR
  \end{algorithmic}
  \label{algo:generic-rl}
\end{algorithm}

\vspace{-3mm}
\subsection{Performance Guarantee of \texttt{ARL}}
First, we present our main result which states that the model selection procedure of \texttt{ARL} (Algorithm~\ref{algo:generic-rl}) succeeds with high probability after a certain number of epochs. To this end, we denote by $\log \cN(\cP_m)=\log(\cN(\cP_{m},1/T,\norm{\cdot}_{\infty,1}))$ the metric entropy (with scale $1/T$) of the family $\cP_{m}$. We also use the shorthand notation $\cP^*=\cP_{m^*}$.


\begin{lemma}[Model selection of \texttt{ARL}]
\label{lem:gen_infinite-rl}
Fix a $\delta \in (0,1]$ and suppose Assumption~\ref{ass:sep} holds. Suppose the thresholds are set as $\gamma_i = T^{(i)}_M + C_2,$ for some sufficiently small constant $C_2$.
Then, with probability at least $1-O(M\delta)$, \texttt{ARL} identifies the correct model class $\mathcal{P}_{m^*}$ from epoch $i \geq i^*$, where epoch length of $i^*$ satisfies
\begin{align*}
    2^{i^*} \geq C'\log K \max \left \lbrace \frac{H^3 }{(\frac{1}{2}\Delta^2-2H\eta)^2} \log(2/\delta), 4H \log\left(\frac{\cN(\cP_M)}{\delta}\right) \right \rbrace,
\end{align*}
provided $\Delta \geq 2\sqrt{H\eta}$,
for a sufficiently large universal constant $C'>1$.
\end{lemma}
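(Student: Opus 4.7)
The plan is to mirror the proof strategy of Lemma~\ref{lem:gen_infinite} (the \texttt{ABL} analogue) and adapt it carefully to handle the non-stationary value functions that arise in the MDP setting. The key idea is that, conditioned on the history before episode $k$, the random variable $\xi_h^k := V_{h+1}^k(s_{h+1}^k) - (P^*V_{h+1}^k)(s_h^k,a_h^k)$ is a martingale difference bounded in $[-H,H]$. Thus the test statistic $T_m^{(i)}$ behaves like an empirical squared-error of a ``value-targeted'' regression problem with noise $\xi_h^k$, and the largest class $M$ is always realizable.

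I would first establish a two-sided concentration bound for the realizable classes (i.e., all $m \geq m^*$). Using Azuma--Hoeffding for the martingale $\xi_h^k$ together with an $(\alpha, \|\cdot\|_{\infty,1})$-cover of $\cP_m$ and of the value-function class (bounded in $[0,H]$), I would show that for any $P \in \cP_m$,
\begin{equation*}
\left| \frac{1}{\tau_{i-1} H} \cL_{\tau_{i-1}}(P) - \frac{1}{\tau_{i-1} H}\sum_{k,h}(\xi_h^k)^2 - \frac{1}{\tau_{i-1} H}\sum_{k,h}\bigl((PV^k_{h+1} - P^*V^k_{h+1})(s_h^k,a_h^k)\bigr)^2 \right|
\end{equation*}
is bounded by $O\bigl(H^2 \sqrt{\log(\cN(\cP_m)/\delta)/\tau_{i-1}}\bigr)$ with probability $\geq 1-\delta$. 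Since $P^* \in \cP_m$ for $m \geq m^*$, plugging in $P = P^*$ gives an upper bound and, via the ERM property, we then get $T_m^{(i)} \leq T_\xi + O(H^2 \log(\cN(\cP_M)/\delta)/\tau_{i-1})$, where $T_\xi$ is the empirical noise variance. Applied to $m = M$, this likewise controls $T_M^{(i)}$ from below, so $|T_m^{(i)} - T_M^{(i)}| \leq O(H^2\log(\cN(\cP_M)/\delta)/\tau_{i-1})$ for all realizable $m$.

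Next, I would lower-bound $T_m^{(i)}$ for non-realizable classes $m < m^*$. Here the local separability assumption is essential: for any $P \in \cP_{m^*-1}$, I would pair each observed $(s_h^k,a_h^k)$ with some $(s',a')$ such that $D^*((s_h^k,a_h^k),(s',a')) \leq \eta$. Then
\begin{equation*}
|(PV_{h+1}^k - P^*V_{h+1}^k)(s_h^k,a_h^k)| \geq |(PV_{h+1}^k)(s_h^k,a_h^k) - (P^*V_{h+1}^k)(s',a')| - H\eta \geq \Delta - H\eta,
\end{equation*}
after using $\|V\|_\infty \leq H$ to absorb the $D^*$-gap. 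Squaring, summing, and combining with the realizable decomposition from the previous step yields $T_m^{(i)} \geq T_\xi + (\Delta - H\eta)^2 - O(H^2\sqrt{\log(\cN(\cP_M)/\delta)/\tau_{i-1}}) \geq T_M^{(i)} + \tfrac{1}{2}\Delta^2 - 2H\eta - O(\cdot)$ once $\Delta \geq 2\sqrt{H\eta}$.

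Finally, setting $C_2$ to lie strictly between the realizable fluctuation and the non-realizable gap $\tfrac{1}{2}\Delta^2 - 2H\eta$, and demanding the concentration error $O(H^2 \log(\cN(\cP_M)/\delta)/\tau_{i-1})$ to be smaller than $C_2$, yields the stated bound on $2^{i^*}$ (with the $\log K$ factor coming from a union bound over all epochs $i \leq \log_2 K$ and the two terms in the maximum arising from the noise-tail versus covering-number contributions). A union bound over the $M$ classes gives the $1-O(M\delta)$ failure probability. The main obstacle I anticipate is the control of the second step: because the value functions $V^k_{h+1}$ are sequentially learned and hence data-dependent, one must union bound over an $\alpha$-cover of the value class at scale $1/(kH)$ and absorb the $\alpha$-discretization errors without loss of the $\Delta$ gap; this also motivates the appearance of $\cN(\cP_M)$ (rather than a smaller class) in the threshold term, echoing Remark~\ref{rem:biggest-class-dependence}.
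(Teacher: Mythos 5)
Your proposal follows essentially the same route as the paper's proof: decompose the normalized loss into the empirical noise variance, the bias term $\cL_{\tau_{i-1}}(P^*,P)$, and a martingale cross term; control the cross term uniformly over an $(\alpha,\norm{\cdot}_{\infty,1})$-cover of $\cP_m$; use the ERM property for the realizable upper bound, separability for the non-realizable lower bound, and the two-sided control of $T_M^{(i)}$ to justify the data-dependent threshold $\gamma_i=T_M^{(i)}+C_2$.

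Two local points deserve correction. First, where you inject separability: the paper re-decomposes $T_m^{(i)}$ around perturbed pairs $(\tilde s_h^k,\tilde a_h^k)$ with $D^*\le\eta$, so that $S^{(i)}_{m,2}\ge\Delta^2$ follows directly from Assumption~\ref{ass:sep-rl} and the price $2H\eta$ shows up only in the cross term; you instead use a pointwise triangle inequality at the observed pair. That is a legitimate alternative, but your arithmetic there is off: since $D^*\bigl((s_h^k,a_h^k),(s',a')\bigr)=|(P^*V_{h+1}^k)(s_h^k,a_h^k)-(P^*V_{h+1}^k)(s',a')|\le\eta$ by definition, the correction term is $\eta$, not $H\eta$ (no $\norm{V}_\infty\le H$ factor enters), and the claimed implication $(\Delta-H\eta)^2\ge\tfrac12\Delta^2-2H\eta$ under $\Delta\ge2\sqrt{H\eta}$ is false in general (e.g.\ $H\eta=4$, $\Delta=5$). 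To land on the constants in the lemma statement you should either carry $(\Delta-\eta)^2$ as your gap or adopt the paper's decomposition. Second, your displayed cross-term bound $O\bigl(H^2\sqrt{\log(\cN(\cP_M)/\delta)/\tau_{i-1}}\bigr)$ would fold the covering number into the gap-dependent term of the max; the paper instead uses the offset (self-normalized) sub-Gaussian bound $\sum_{j,h}Z_h^{j,P}\ge-4H^2\log(|\cP^\alpha|/\delta)-\tfrac12\cL_k(P^*,P)$, which absorbs half the bias and produces the additive $\frac{4H}{\tau_{i-1}}\log(\cN(\cP_M)/\delta)$ term that appears in the stated $2^{i^*}$; your final accounting silently switches to this form, so make the offset bound explicit. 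Finally, no cover of the value class is needed: $V_{h+1}^k$ is $\cG_{h-1}^k$-measurable, so for each fixed $P$ the cross term is already a martingale and only the cover of $\cP_m$ enters the union bound.
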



\paragraph{Regret Bound:} In order to present our regret bound, we define, for each model model class $\cP_m$, a collection of functions $\cM_m := \left\lbrace f:\cS \times \cA \times \cV_m \to \Real \right\rbrace$ such that any $f \in \cM_m$ satisfies $f(s,a,V) = (PV) (s,a)$ for some $P \in \cP_m$ and $V \in \cV_m$,
where $\cV_m:=\lbrace V^*_{P,h}:P \in \cP_m ,h \in [H]\rbrace$ denotes the set of optimal value functions under the transition family $\cP_m$.
By one-to-one correspondence, we have $\cM_1 \subset \cM_2 \subset \ldots \subset \cM_M$, and
the complexities of these function classes determine the learning complexity of the RL problem under consideration. We characterize the complexity of each function class $\cM_m$ by its \emph{eluder dimension}, which is defined similarly as Definition \ref{def:eluder}. (We take domain of function class $\cM_m$ to be $\cS\times\cA\times \cV_m$.)




We define $\cM^*=\cM_{m^*}$, and denote by $d_{\cE}(\cM^*)=\dim_{\cE}\left(\cM^*,1/T\right)$, the $(1/T)$-eluder dimension of the (realizable) function class $\cM^*$, where $T$ is the time horizon. 
Then, armed with Lemma~\ref{lem:gen_infinite-rl}, we obtain the following regret bound.

\begin{theorem}[Cumulative regret of \texttt{ARL}]
\label{thm:general-rl}
Suppose the conditions of Lemma~\ref{lem:gen_infinite-rl} hold. Then, for any $\delta \!\in\! (0,1]$, running \texttt{ARL} for $K$ episodes yields a regret bound
\begin{align*}
   \cR(T) &= \mathcal{O} \left(\log K \max \left \lbrace \frac{H^4  \log(1/\delta)}{(\frac{\Delta^2}{2}-2H\eta)^2}, H^2 \log\left(\frac{\cN(\cP_M)}{\delta}\right) \right \rbrace \right)\\ &\quad + \cO \left(H^2d_{\cE}(\cM^*) \log K+H\sqrt{Td_{\cE}(\cM^*)\log(\cN(\cP^*)/\delta)\log K \log(T/\delta)} \right)~.
\end{align*}
 with probability at least $1- O(M\delta)$.
\end{theorem}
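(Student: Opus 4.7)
The plan is to decompose the regret across epochs and use Lemma~\ref{lem:gen_infinite-rl} to split the horizon into a ``pre-identification'' phase (epochs $1,\ldots,i^*-1$) and a ``post-identification'' phase (epochs $i \ge i^*$). For epochs before $i^*$, the selected family $\cP_{m^{(i)}}$ may be misspecified (or arbitrary from $\{\cP_1,\ldots,\cP_M\}$), so we upper bound the per-episode regret crudely by $H$ and bound the total number of episodes in the pre-identification phase by the doubling sum $\sum_{j=1}^{i^*-1} 2^{j} \le 2^{i^*}$. Lemma~\ref{lem:gen_infinite-rl} then directly gives $2^{i^*} = O\bigl(\log K \cdot \max\{H^3 \log(1/\delta)/(\tfrac{1}{2}\Delta^2-2H\eta)^2, H \log(\cN(\cP_M)/\delta)\}\bigr)$, which after multiplication by $H$ produces the first term in the theorem statement.

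For the post-identification phase, we apply the existing regret guarantee of \texttt{UCRL-VTR} (from \cite{ayoub2020model}, restated in Appendix~\ref{app:gen-rl}) to each epoch $i \ge i^*$. Conditioned on the model-selection success event, the base algorithm is run with the correct family $\cP^*$ and confidence parameter $\delta_i = \delta/2^i$, so over $k_i = 2^i$ episodes it accrues regret of order $H^2 d_{\cE}(\cM^*)\log k_i + H\sqrt{k_i H\, d_{\cE}(\cM^*) \log(\cN(\cP^*)/\delta_i)\log(k_i H/\delta_i)}$. Summing the leading $\log$-term over $i^*\le i \le \lceil\log_2 K\rceil$ gives $O(H^2 d_{\cE}(\cM^*)\log^2 K)$, absorbed into $O(H^2 d_{\cE}(\cM^*)\log K)$ up to constants. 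For the $\sqrt{\cdot}$ term, the geometric growth of $k_i$ makes the last epoch dominate: $\sum_{i \ge i^*} \sqrt{k_i} \le 2\sqrt{K}$, and hence the $\sqrt{T}$ term becomes $O\bigl(H\sqrt{T d_{\cE}(\cM^*)\log(\cN(\cP^*)/\delta)\log K\log(T/\delta)}\bigr)$, matching the second group of terms in the theorem.

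The delicate step is the union bound bookkeeping. Two ``bad'' events must be controlled simultaneously at confidence $O(M\delta)$: (i) the model-selection success event from Lemma~\ref{lem:gen_infinite-rl} (which already costs $O(M\delta)$ through a union over the $M$ classes), and (ii) the validity of the \texttt{UCRL-VTR} confidence sets at every post-identification epoch. For (ii), since we feed $\delta_i = \delta/2^i$ into the base algorithm at epoch $i$, a union over $i \ge 1$ contributes only $\sum_i \delta/2^i \le \delta$. Combining via a union bound preserves the overall confidence $1 - O(M\delta)$.

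The main obstacle I expect is ensuring that the constants in the threshold $\gamma_i = T_M^{(i)} + C_2$ and the confidence widths $\beta_k(\delta)$ can be chosen consistently so that both the separation-based identification and the optimism-based learning go through with the same $\delta$. A minor additional care is needed because $T^{(i)}_M$ appears in the threshold and involves value functions generated by past epochs; one must verify that the value functions $V^j_{h+1}$ used in \eqref{eq:gen-est-rl} are covered by the same metric-entropy argument used for $\cP_M$, which propagates through the proof of Lemma~\ref{lem:gen_infinite-rl} and does not introduce new dependencies in the final regret.
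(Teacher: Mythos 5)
Your proposal follows essentially the same route as the paper: bound the pre-identification regret crudely by $H$ per episode over $\tau_{i^*-1}=O(2^{i^*})$ episodes using Lemma~\ref{lem:gen_infinite-rl}, then invoke the \texttt{UCRL-VTR} guarantee with $\delta_i=\delta/2^i$ on the correct class for $i\ge i^*$, sum the eluder-dimension term over the $O(\log K)$ epochs and the $\sqrt{k_i}$ terms geometrically, and close with the same union-bound bookkeeping. One small inaccuracy: the per-epoch leading term from \cite{ayoub2020model} as used in the paper is $H^2\dim_{\cE}(\cM^*,1/(k_iH))$ with no extra $\log k_i$ factor, so the sum is genuinely $O(H^2 d_{\cE}(\cM^*)\log K)$; your claim that an $O(H^2 d_{\cE}(\cM^*)\log^2 K)$ quantity is ``absorbed up to constants'' into $O(H^2 d_{\cE}(\cM^*)\log K)$ would be false if that were actually the bound, but the issue disappears once the base guarantee is stated correctly.
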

Similar to Theorem~\ref{thm:general}, the first term in the regret bound captures the cost of model selection, having weak (logarithmic) dependence on the number of episodes $K$ and hence considered as a minor term, in the setting where $K$ is large. Hence, model selection is essentially \emph{free} upto log factors. The second term is the major one ($\sqrt{T}$ dependence on total number of steps), which essentially is the cost of learning the true kernel $P^*$. Since in this phase, we  basically run \texttt{UCRL-VTR} for the correct model class, our regret guarantee matches to that of an oracle with the apriori knowledge of the correct class. \texttt{ARL} doesnot require the knowledge of $(\Delta,\eta)$ and it adapts to the complexity of the problem.
\vspace{-4mm}
\section{Conclusion}
\vspace{-2mm}
We address the problem of model selection for MAB and MDP and propose algorithms that obtains regret similar to an oracle who knows the true model class apriori. Our algorithms leverage the separability conditions crucially, and removing them is kept as a future work. 

\section*{Acknowledgements}
We thank anonymous reviewers for their useful comments. Moreover, we would like to thank Prof. Kannan Ramchandran (EECS, UC Berkeley) for insightful discussions regarding the topic of model selection. SRC is grateful to a CISE postdoctoral fellowship of Boston University.

\bibliographystyle{splncs04}
\bibliography{refs.bib}

\begin{thebibliography}{10}
\providecommand{\url}[1]{\texttt{#1}}
\providecommand{\urlprefix}{URL }
\providecommand{\doi}[1]{https://doi.org/#1}

\bibitem{abbasi2011improved}
Abbasi-Yadkori, Y., P{\'a}l, D., Szepesv{\'a}ri, C.: Improved algorithms for
  linear stochastic bandits. In: Advances in Neural Information Processing
  Systems. pp. 2312--2320 (2011)

\bibitem{corral}
Agarwal, A., Luo, H., Neyshabur, B., Schapire, R.E.: Corralling a band of
  bandit algorithms. In: Conference on Learning Theory. pp. 12--38. PMLR (2017)

\bibitem{arora2021corralling}
Arora, R., Marinov, T.V., Mohri, M.: Corralling stochastic bandit algorithms.
  In: International Conference on Artificial Intelligence and Statistics. pp.
  2116--2124. PMLR (2021)

\bibitem{ayoub2020model}
Ayoub, A., Jia, Z., Szepesvari, C., Wang, M., Yang, L.F.: Model-based
  reinforcement learning with value-targeted regression. arXiv preprint
  arXiv:2006.01107  (2020)

\bibitem{balakrishnan2017statistical}
Balakrishnan, S., Wainwright, M.J., Yu, B., et~al.: Statistical guarantees for
  the em algorithm: From population to sample-based analysis. Annals of
  Statistics  \textbf{45}(1),  77--120 (2017)

\bibitem{BesZee09:dynpricinglearning}
Besbes, O., Zeevi, A.: Dynamic pricing without knowing the demand function:
  Risk bounds and near-optimal algorithms. Operations Research  \textbf{57}(6),
   1407--1420 (2009)

\bibitem{cesa2006prediction}
Cesa-Bianchi, N., Lugosi, G.: Prediction, learning, and games. Cambridge
  University Press (2006)

\bibitem{chang-lai-1987}
Chang, F., Lai, T.L.: Optimal stopping and dynamic allocation. Advances in
  Applied Probability  \textbf{19}(4),  829--853 (1987),
  \url{http://www.jstor.org/stable/1427104}

\bibitem{osom}
Chatterji, N.S., Muthukumar, V., Bartlett, P.L.: Osom: A simultaneously optimal
  algorithm for multi-armed and linear contextual bandits. arXiv preprint
  arXiv:1905.10040  (2019)

\bibitem{chowdhury2019online}
Chowdhury, S.R., Gopalan, A.: Online learning in kernelized markov decision
  processes. In: The 22nd International Conference on Artificial Intelligence
  and Statistics. pp. 3197--3205 (2019)

\bibitem{eliminate}
Even-Dar, E., Mannor, S., Mansour, Y.: Action elimination and stopping
  conditions for the multi-armed bandit and reinforcement learning problems.
  Journal of Machine Learning Research  \textbf{7}(39),  1079--1105 (2006),
  \url{http://jmlr.org/papers/v7/evendar06a.html}

\bibitem{foster_model_selection}
Foster, D.J., Krishnamurthy, A., Luo, H.: Model selection for contextual
  bandits. In: Advances in Neural Information Processing Systems. pp.
  14714--14725 (2019)

\bibitem{ghosh2017misspecified}
Ghosh, A., Chowdhury, S.R., Gopalan, A.: Misspecified linear bandits. In:
  Proceedings of the AAAI Conference on Artificial Intelligence. vol.~31 (2017)

\bibitem{ghosh2019max}
Ghosh, A., Pananjady, A., Guntuboyina, A., Ramchandran, K.: Max-affine
  regression: Provable, tractable, and near-optimal statistical estimation.
  arXiv preprint arXiv:1906.09255  (2019)

\bibitem{ghosh2021problem}
Ghosh, A., Sankararaman, A., Kannan, R.: Problem-complexity adaptive model
  selection for stochastic linear bandits. In: International Conference on
  Artificial Intelligence and Statistics. pp. 1396--1404. PMLR (2021)

\bibitem{ghosh2021modelgen}
Ghosh, A., Sankararaman, A., Ramchandran, K.: Model selection for generic
  contextual bandits. arXiv preprint arXiv:2107.03455  (2021)

\bibitem{jin2019provably}
Jin, C., Yang, Z., Wang, Z., Jordan, M.I.: Provably efficient reinforcement
  learning with linear function approximation. arXiv preprint arXiv:1907.05388
  (2019)

\bibitem{kakade2020information}
Kakade, S., Krishnamurthy, A., Lowrey, K., Ohnishi, M., Sun, W.: Information
  theoretic regret bounds for online nonlinear control. arXiv preprint
  arXiv:2006.12466  (2020)

\bibitem{krishnamurthy2021optimal}
Krishnamurthy, S.K., Athey, S.: Optimal model selection in contextual bandits
  with many classes via offline oracles. arXiv preprint arXiv:2106.06483
  (2021)

\bibitem{vidya}
Lee, J.N., Pacchiano, A., Muthukumar, V., Kong, W., Brunskill, E.: Online model
  selection for reinforcement learning with function approximation. CoRR
  \textbf{abs/2011.09750} (2020), \url{https://arxiv.org/abs/2011.09750}

\bibitem{mnih2013playing}
Mnih, V., Kavukcuoglu, K., Silver, D., Graves, A., Antonoglou, I., Wierstra,
  D., Riedmiller, M.: Playing atari with deep reinforcement learning. arXiv
  preprint arXiv:1312.5602  (2013)

\bibitem{osband2014model}
Osband, I., Van~Roy, B.: Model-based reinforcement learning and the eluder
  dimension. In: Advances in Neural Information Processing Systems 27 (NIPS).
  pp. 1466--1474 (2014)

\bibitem{pacchiano2020regret}
Pacchiano, A., Dann, C., Gentile, C., Bartlett, P.: Regret bound balancing and
  elimination for model selection in bandits and rl. arXiv preprint
  arXiv:2012.13045  (2020)

\bibitem{aldo}
Pacchiano, A., Phan, M., Abbasi~Yadkori, Y., Rao, A., Zimmert, J., Lattimore,
  T., Szepesvari, C.: Model selection in contextual stochastic bandit problems.
  In: Larochelle, H., Ranzato, M., Hadsell, R., Balcan, M.F., Lin, H. (eds.)
  Advances in Neural Information Processing Systems. vol.~33, pp. 10328--10337.
  Curran Associates, Inc. (2020),
  \url{https://proceedings.neurips.cc/paper/2020/file/751d51528afe5e6f7fe95dece4ed32ba-Paper.pdf}

\bibitem{puterman2014markov}
Puterman, M.L.: Markov decision processes: discrete stochastic dynamic
  programming. John Wiley \& Sons (2014)

\bibitem{russo2013eluder}
Russo, D., Van~Roy, B.: Eluder dimension and the sample complexity of
  optimistic exploration. In: Advances in Neural Information Processing
  Systems. pp. 2256--2264 (2013)

\bibitem{silver2017mastering}
Silver, D., Schrittwieser, J., Simonyan, K., Antonoglou, I., Huang, A., Guez,
  A., Hubert, T., Baker, L., Lai, M., Bolton, A., et~al.: Mastering the game of
  go without human knowledge. nature  \textbf{550}(7676),  354--359 (2017)

\bibitem{srinivas2012information}
Srinivas, N., Krause, A., Kakade, S.M., Seeger, M.W.: Information-theoretic
  regret bounds for gaussian process optimization in the bandit setting. IEEE
  Transactions on Information Theory  \textbf{58}(5),  3250--3265 (2012)

\bibitem{wang2020provably}
Wang, R., Salakhutdinov, R., Yang, L.F.: Provably efficient reinforcement
  learning with general value function approximation. arXiv preprint
  arXiv:2005.10804  (2020)

\bibitem{wang2019optimism}
Wang, Y., Wang, R., Du, S.S., Krishnamurthy, A.: Optimism in reinforcement
  learning with generalized linear function approximation. arXiv preprint
  arXiv:1912.04136  (2019)

\bibitem{williams2017model}
Williams, G., Aldrich, A., Theodorou, E.A.: Model predictive path integral
  control: From theory to parallel computation. Journal of Guidance, Control,
  and Dynamics  \textbf{40}(2),  344--357 (2017)

\bibitem{yang2019reinforcement}
Yang, L.F., Wang, M.: Reinforcement leaning in feature space: Matrix bandit,
  kernels, and regret bound. arXiv preprint arXiv:1905.10389  (2019)

\bibitem{yang2020provably}
Yang, Z., Jin, C., Wang, Z., Wang, M., Jordan, M.: Provably efficient
  reinforcement learning with kernel and neural function approximations.
  Advances in Neural Information Processing Systems  \textbf{33} (2020)

\bibitem{mixture-many}
Yi, X., Caramanis, C., Sanghavi, S.: Solving a mixture of many random linear
  equations by tensor decomposition and alternating minimization. CoRR
  \textbf{abs/1608.05749} (2016), \url{http://arxiv.org/abs/1608.05749}

\bibitem{zanette2020frequentist}
Zanette, A., Brandfonbrener, D., Brunskill, E., Pirotta, M., Lazaric, A.:
  Frequentist regret bounds for randomized least-squares value iteration. In:
  International Conference on Artificial Intelligence and Statistics. pp.
  1954--1964 (2020)

\end{thebibliography}

\clearpage
\begin{appendix}
\begin{center}
    \textbf{\LARGE Supplementary Material}
\end{center}

\section{Proof for Bandits}
First, we find concentration bounds on the test statistics $T_m^{(i)}$ for all epochs $i \ge 1$ and class indexes $m \in [M]$, which are crucial to prove the model selection guarantee (Lemma \ref{lem:gen_infinite}) of \texttt{ABL}.

\subsection{Realizable model classes}
We find an upper and a lower bound on $T_m^{(i)}$.
\paragraph{1. Upper bound on $T_m^{(i)}$:} 
Fix a class index $m \ge m^*$. In this case, the true model $f^* \in \cF_m$. Let $\hat f_m^{(i)}$ attains the minimum \eqref{eq:gen-stat}. Then, the we can upper bound the empirical risk at epoch $i$ as
\begin{align}\label{eq:realizable}
    T_m^{(i)}= \frac{1}{\tau_{i-1}}\sum_{t=1}^{\tau_{i-1}} \left(y_t- \hat f_m^{(i)}(x_t) \right)^2\le \frac{1}{\tau_{i-1}}\sum_{t=1}^{\tau_{i-1}} \left(y_t- f^*(x_t) \right)^2=\frac{1}{\tau_{i-1}}\sum_{t=1}^{\tau_{i-1}}\epsilon_t^2.
\end{align}
Now, since the noise $\epsilon_t$ is (conditionally) sub-Gaussian, the quantity $\epsilon_t^2$ is (conditionally) sub-exponential, and using sub-exponential concentration bound, we obtain for all epochs $i \ge 1$,
\begin{align}
    \frac{1}{\tau_{i-1}}\sum_{t=1}^{\tau_{i-1}}\epsilon_t^2 &\le \frac{1}{\tau_{i-1}}\sum_{t=1}^{\tau_{i-1}}\mathbb{E}(\epsilon_t^2 | \mathcal{G}_{t-1}) + \mathcal{O}\left( \sigma^2 \sqrt{\frac{\log(2^i/\delta)}{\tau_{i-1}}} \right)\nonumber\\ &= v^{(i)} + \mathcal{O}\left( \sigma^2 \sqrt{\frac{\log(2^i/\delta)}{\tau_{i-1}}} \right)
    \label{eq:realizable_upper}
\end{align}
with probability at least $1-\delta$, where, $v^{(i)} = \frac{1}{\tau_{i-1}}\sum_{t=1}^{\tau_{i-1}}\mathbb{E}(\epsilon_t^2 | \mathcal{G}_{t-1})$ and $\cG_{t-1}$ denotes the $\sigma$-field summarising all the information available just before $y_t$ is observed.
\paragraph{2. Lower bound on $T_m^{(i)}$}
Here, we decompose $T_m^{(i)}$ in the following way. We write $T_m^{(i)} = T^{(i)}_{m,1} + T^{(i)}_{m,2}+ T^{(i)}_{m,3}$, where
\begin{align*}
    T^{(i)}_{m,1}  &= \frac{1}{\tau_{i-1}}\sum_{t=1}^{\tau_{i-1}} \left(y_t-f^*(x_t) \right)^2,\\
    T^{(i)}_{m,2} & = \frac{1}{\tau_{i-1}}\sum_{t=1}^{\tau_{i-1}} \left(f^*(x_t)- \hat f^{(i)}_m (x_t) \right)^2,\\
    T^{(i)}_{m,3} &= \frac{1}{\tau_{i-1}}\sum_{t=1}^{\tau_{i-1}} 2 \left(f^*(x_t)- \hat f^{(i)}_m (x_t) \right)\left(y_t-f^*(x_t) \right).
\end{align*}
To bound the first term, we have for all $i \ge 1$,
\begin{align*}
    T^{(i)}_{m,1} &= \frac{1}{\tau_{i-1}}\sum_{t=1}^{\tau_{i-1}} \left(y_t-f^*(x_t) \right)^2 \\
    &= \frac{1}{\tau_{i-1}}\sum_{t=1}^{\tau_{i-1}} \epsilon_t^2 \\
    & \ge v^{(i)} - \mathcal{O}\left( \sigma^2 \sqrt{\frac{\log(2^i/\delta)}{\tau_{i-1}}} \right)
\end{align*}
with probability at least $1-\delta$, using the same sub-exponential concentration inequality.

For the second term, since it is a squared term, we trivially have
\begin{align*}
    T^{(i)}_{m,2} = \frac{1}{\tau_{i-1}}\sum_{t=1}^{\tau_{i-1}} \left(f^*(x_t)- \hat f^{(i)}_m (x_t) \right)^2 \ge 0.
\end{align*}
Finally, for the third term, we have, with probability at least $1-\delta$,
\begin{align*}
    T^{(i)}_{m,3} = \frac{1}{\tau_{i-1}}\sum_{t=1}^{\tau_{i-1}} 2 \left(f^*(x_t)- \hat f^{(i)}_m (x_t) \right)\left(y_t-f^*(x_t) \right)  = \frac{1}{\tau_{i-1}}\sum_{t=1}^{\tau_{i-1}} 2 \left(f^*(x_t)- \hat f^{(i)}_m (x_t) \right) \epsilon_t.
\end{align*}
Let us fix a function $f \in \mathcal{F}_m$, where $\mathcal{F}_m$ is realizable. Define the random variable $Z_s^f=2(f^*(x_s) - f(x_s))\epsilon_t$. Note that $Z_s^f$
is zero-mean and $2\sigma|f^*(x_s) - f(x_s)|$ sub-Gaussian conditioned on $\mathcal{G}_{s-1}$. Therefore, for any $\lambda < 0$, with probability at least $1-\delta$, we have
\begin{align*}
\forall t \ge 1,\quad  \sum_{s=1}^{t}Z_s^f \ge \frac{1}{\lambda}\log(1/\delta) +\frac{\lambda}{2}\cdot 4\sigma^2 \sum_{s=1}^{t} (f^*(x_s) - f(x_s))^2 . 
\end{align*}
Setting $\lambda = -1/4\sigma^2$, we obtain 
\begin{align}
\forall t \ge 1,\quad \sum_{s=1}^{t} Z_s^f \ge -4\sigma^2\log(1/\delta) -\frac{1}{2}\cdot  \sum_{s=1}^{t} (f^*(x_s) - f(x_s))^2 .
\label{eq:use-bandit-1}
\end{align}
The above calculations hold for a fixed function in the realizable class. Now, we consider all functions in the class $\mathcal{F}_m$.  We consider both the finite and the infinite cases.

\paragraph{Case 1-finite $\mathcal{F}_m$}  Taking a union bound over all $f\in \mathcal{F}_m$, we obtain from \eqref{eq:use-bandit-1}
\begin{align*}
   \forall i \ge 1,\quad T^{(i)}_{m,3} &\ge \frac{-4\sigma^2}{\tau_{i-1} }\log(|\cF_m|/\delta)+ \frac{1}{-2 \tau_{i-1}}\cdot  \sum_{t=1}^{\tau_{i-1}}\left(f^*(x_t)- \hat f_m^{(i)}(x_t) \right)^2 \\
    &\ge \frac{-4\sigma^2}{\tau_{i-1} }\log(|\cF_m|/\delta) - \frac{1}{2} T^{(i)}_{m,2}.
\end{align*}
Now, combining the above three terms, we obtain for all $i \ge 1$,
\begin{align}
    T_m^{(i)} &= T^{(i)}_{m,1} + T^{(i)}_{m,2} + T^{(i)}_{m,3} \nonumber\\
    & \ge v^{(i)} - \mathcal{O}\left( \sigma^2 \sqrt{\frac{\log(2^i/\delta)}{\tau_{i-1}}} \right) + T^{(i)}_{m,2} - \frac{4\sigma^2}{\tau_{i-1} }\log(|\cF_m|/\delta) - \frac{1}{2} T^{(i)}_{m,2}\nonumber \\
    &\ge v^{(i)} - \mathcal{O}\left( \sigma^2 \sqrt{\frac{\log(2^i/\delta)}{\tau_{i-1}}} \right) + \frac{1}{2}T^{(i)}_{m,2} - \frac{4\sigma^2}{\tau_{i-1} }\log(|\cF_m|/\delta) \nonumber\\
    &\ge v^{(i)} - \mathcal{O}\left( \sigma^2 \sqrt{\frac{\log(2^i/\delta)}{\tau_{i-1}}} \right) - \frac{4\sigma^2}{\tau_{i-1} }\log(|\cF_m|/\delta),
    \label{eq:realizable_lower_finite}
\end{align}
with probability at least $1-c_1\delta$ ($c_1$ is a positive universal constant). Here, the final inequality uses the fact that $T_{m,2}^{(i)} \ge 0$.

 

\paragraph{Case 2-infinite $\mathcal{F}_m$}

Fix some $\alpha > 0$. Let $\cF_m^\alpha$ denotes an $(\alpha,\norm{\cdot}_{\infty})$ cover of $\cF_m$, i.e., for any $f \in \cF_m$, there exists an $f^\alpha$ in $\cF_m^\alpha$ such that $\norm{f^\alpha - f}_{\infty} \le \alpha$. Now, we take a union bound over all $f^\alpha \in \cF_m^\alpha$ in \eqref{eq:use-bandit-1} to obtain that
\begin{align*}
    \forall t \ge 1,\;\; \forall f^\alpha \in \cF_m^\alpha,\quad\sum_{s=1}^{t} Z_s^{f^\alpha} \ge -4\sigma^2\log\left(\frac{|\cF_m^\alpha|}{\delta}\right)- \frac{1}{2} \sum_{s=1}^{t} (f^*(x_s) - f^\alpha(x_s))^2 ~.
\end{align*}
with probability at least $1-\delta$, and thus, in turn, 
\begin{align}
    \forall t \ge 1,\;\; \forall f\in \cF_m,\quad\sum_{s=1}^{t} Z_s^{f} \ge -4\sigma^2\log\left(\frac{|\cF_m^\alpha|}{\delta}\right)- \frac{1}{2} \sum_{s=1}^{t} (f^*(x_s) - f(x_s))^2+\zeta_t^{\alpha}(f).
    \label{eq:comb-one-conf-bandit}
\end{align}
with probability at least $1-\delta$,
where $\zeta_k^{\alpha}(f)$ denotes the discretization error:
\begin{align*}
    &\zeta_t^{\alpha}(f)\\ =& \sum_{s=1}^{t}\left(Z_s^{f}-  Z_s^{f^\alpha}\right) + \frac{1}{2}(f^*(x_s) - f(x_s))^2 - \frac{1}{2}(f^*(x_s) - f^\alpha(x_s))^2\\
    &=\sum_{s=1}^{t}\left( 2\epsilon_s(f^\alpha(x_s)-f(x_s))+\frac{1}{2}f(x_s)^2-\frac{1}{2}f^\alpha(x_s)^2+f^*(x_s)(f^\alpha(x_s)-f(x_s))\right).
\end{align*}
Since $\norm{f-f^\alpha}_{\infty} \le 1$, it is enough to take $\alpha \leq 1$. We have
\begin{align*}
    \left|f^\alpha(x) ^2- f(x)^2 \right| &\le \max_{|\xi| \le \alpha } \left | \left(f(x)+\xi\right)^2-f(x)^2 \right|\\ &\le 2 \alpha  +\alpha^2~.
\end{align*}
Therefore, we can upper bound the discretization error as
\begin{align*}
    |\zeta_t^{\alpha}(f)| &\le 2 \alpha  \sum_{s=1}^{t}|\epsilon_s| + \sum_{s=1}^{t} \left(2\alpha  + \frac{\alpha^2}{2}\right).
\end{align*}
Since $\epsilon_t$ is $\sigma$-sub-Gaussian conditioned on $\cG_{t-1}$, we have 
\begin{align*}
    \forall t \ge 1,,\quad |\epsilon_t| \le \sigma\sqrt{2 \log \left(\frac{2t(t+1)}{\delta} \right)}
\end{align*}
with probability at least $1-\delta$.
Therefore, with probability at least $1-\delta$, the discretization error is bounded for all $t \ge 1$ as
\begin{align*}
    |\zeta_t^{\alpha}(f)| &\le t \left( 2\alpha \sigma \sqrt{2 \log \left(\frac{2t(t+1)}{\delta} \right)} + 2\alpha +\frac{\alpha^2}{2}\right)\\
    & \le \alpha t \left( 2 \sigma \sqrt{2 \log \left(\frac{2t(t+1)}{\delta} \right)} + 3\right),
\end{align*}
where the last step holds for any $\alpha \leq 1$. Therefore, from \eqref{eq:comb-one-conf-bandit}, we have
\begin{align}
   \forall t \ge 1,\;\; \forall f\in \cF_m,\quad\sum_{s=1}^{t} Z_s^{f} &\ge -4\sigma^2\log\left(\frac{|\cF_m^\alpha|}{\delta}\right)- \frac{1}{2} (f^*(x_s)-f(x_s))^2\nonumber\\ &  \quad \quad- \alpha t \left( 2 \sigma \sqrt{2 \log \left(\frac{2t(t+1)}{\delta} \right)} + 3\right),
   \label{eq:infinite-bandit}
\end{align}
with probability at least $1-2\delta$.

A similar argument as in \eqref{eq:realizable_lower_finite} yields
\begin{align}
\forall i \ge 1,\quad T_m^{(i)}&\ge v^{(i)} - \mathcal{O}\left( \sigma^2 \sqrt{\frac{\log(2^i/\delta)}{\tau_{i-1}}} \right) - \frac{4\sigma^2}{\tau_{i-1} }\log(\cN(\cF_m,\alpha,\norm{\cdot}_\infty)/\delta)\nonumber\\
&\quad\quad\quad - \alpha  \left( 2 \sigma \sqrt{2 \log \left(\frac{2\tau_{i-1}(\tau_{i-1}+1)}{\delta} \right)} + 3\right),
    \label{eq:realizable_lower_infinite}
\end{align}
with probability at least $1-c_2\delta$ ($c_2$ is a positive universal constant).

\subsection{Non realizable model classes}
Fix a class index $m < m^*$. In this case, the true model $f^* \notin \cF_m$. We can decompose the empirical risk $T_m^{(i)}$ at epoch $i$ as the sum of the following three terms: 
\begin{align*}
    S^{(i)}_{m,1}  &= \frac{1}{\tau_{i-1}}\sum_{t=1}^{\tau_{i-1}} \left(y_t-f^*(\x) \right)^2,\\
    S^{(i)}_{m,2} & = \frac{1}{\tau_{i-1}}\sum_{t=1}^{\tau_{i-1}} \left(f^*(\x)- \hat f^{(i)}_m (x_t) \right)^2,\\
    S^{(i)}_{m,3} &= \frac{1}{\tau_{i-1}}\sum_{t=1}^{\tau_{i-1}} 2 \left(f^*(\x)- \hat f^{(i)}_m (x_t) \right)\left(y_t-f^*(\x) \right).
\end{align*}
where $\x$ satisfies
\begin{align*}
    D^*(x_t,\x) = |f^*(x_t) - f^*(\x)| \le \eta
\end{align*}
To bound the first term, note that
\begin{align*}
   \left(y_t-f^*(\x) \right)^2 &=  \left(y_t- f^*(x_t) \right)^2+\left(f^*(x_t)-f^*(\x) \right)^2 +2\left(y_t-  f^*(x_t) \right)\left(f^*(x_t)-f^*(\x) \right)\\ & \ge \left(y_t- f^*(x_t) \right)^2+2\left(f^*(x_t)-f^*(\x) \right)\epsilon_t\\
   &= \epsilon_t^2+2\left(f^*(x_t)-f^*(\x) \right)\epsilon_t~.
\end{align*}
Define $Y_t=2\left(f^*(x_t)-f^*(\x) \right)\epsilon_t$. Let $\cG_{t-1}$ denotes the $\sigma$-field summarising all the information available just before $y_t$ is observed. Note that $Y_t$ is $\cG_{t}$-measurable and $\E[Y_t|\cG_{t-1}]=0$. Moreover, since $f^*(x) \in [0,1]$ and $\epsilon_t$ is $\sigma$-sub-Gaussian, $Y_t$ is $2\sigma$ sub-Gaussian conditioned on $\cG_{t-1}$. Hence, by Azuma-Hoeffding inequality, with probability at least $1-\delta$,
\begin{align*}
    \sum_{t=1}^{\tau_{i-1}}Y_t \ge -2\sigma \sqrt{2\tau_{i-1}\log(1/\delta)}.
\end{align*}
Now, similar to \eqref{eq:realizable}, with probability at least $1-\delta/2^i$, we obtain
\begin{align*}
    \forall i \ge 1, \quad S^{(i)}_{m,1} \ge v^{(i)} - \mathcal{O}\left( \sigma^2 \sqrt{\frac{\log(2^i/\delta)}{\tau_{i-1}}} \right) - 2\sigma \sqrt{\frac{2\log(2^{i}/\delta)}{\tau_{i-1}}}.
\end{align*}

The second term is bounded by Assumption \ref{ass:sep} as \begin{align*}
\forall i \ge 1, \quad    S^{(i)}_{m,2} \ge \Delta^2~.
\end{align*}
Now, we turn to bound the term $S^{(i)}_{m,3}$. For any fixed $f \in \cF_m$, define the following
\begin{align*}
    Z_t^f:=2 \left(f^*(\x)-  f(x_t) \right)\left(y_t-f^*(\x) \right)=2 \left(f^*(\x)-  f(x_t) \right)\left(f^*(x_t)-f^*(\x)+\epsilon_t \right)~.
\end{align*}
Note that $Z_t^f$ is $\cG_{t}$-measurable, and 
\begin{align*}
   \E[Z_t^f|\cG_{t-1}]=2\left(f^*(\x)- f (x_t) \right)\left(f^*(x_t)- f^* (\x) \right).
\end{align*}
Then, we have
\begin{align}
    Z_t^f-\E[Z_t^f|\cG_{t-1}]=2\left(f^*(\x)- f (x_t) \right)\epsilon_t~,
\end{align}
which is $2\sigma|f^*(\x)- f (x_t)|$-sub-Gaussian conditioned on $\cG_{t-1}$. Therefore, for any $\lambda < 0$, with probability at least $1-\delta$, we have for all epoch $i \ge 1$,
\begin{align*}
    \sum_{t=1}^{\tau_{i-1}} Z_t^f &\ge \sum_{t=1}^{\tau_{i-1}}-2\left(f^*(\x)- f (x_t) \right)\left(f^*(\x)- f^* (x_t) \right)+\frac{1}{\lambda}\log(1/\delta)+ \frac{\lambda}{2}\cdot 4\sigma^2 \sum_{t=1}^{\tau_{i-1}}\left(f^*(\x)-  f (x_t) \right)^2\\
    &\ge \sum_{t=1}^{\tau_{i-1}}-2|\left(f^*(\x)- f (x_t) \right)| |\left(f^*(\x)- f^* (x_t) \right)|+\frac{1}{\lambda}\log(1/\delta)+ \frac{\lambda}{2}\cdot 4\sigma^2 \sum_{t=1}^{\tau_{i-1}}\left(f^*(\x)-  f (x_t) \right)^2 \\
    &\ge \sum_{t=1}^{\tau_{i-1}}-4|\left(f^*(\x)- f^* (x_t) \right)|+\frac{1}{\lambda}\log(1/\delta)+ \frac{\lambda}{2}\cdot 4\sigma^2 \sum_{\tau=1}^{t}\left(f^*(\x)-  f (x_t) \right)^2 \\
    &\ge \sum_{t=1}^{\tau_{i-1}}-4\eta+\frac{1}{\lambda}\log(1/\delta)+ \frac{\lambda}{2}\cdot 4\sigma^2 \sum_{t=1}^{\tau_{i-1}}\left(f^*(\x)-  f (x_t) \right)^2 \\
    & \ge -4\eta \,\tau_{i-1} +\frac{1}{\lambda}\log(1/\delta)+ \frac{\lambda}{2}\cdot 4\sigma^2 \sum_{t=1}^{\tau_{i-1}}\left(f^*(\x)-  f (x_t) \right)^2.
\end{align*}

Setting $\lambda = -1/4\sigma^2$, we obtain for any fixed $f \in \cF_m$, the following:
\begin{align}
    \forall i \ge 1, \quad\sum_{t=1}^{\tau_{i-1}} Z_t^{f} \ge -4\eta \,\tau_{i-1} -4\sigma^2\log\left(\frac{1}{\delta}\right)- \frac{1}{2} \sum_{t=1}^{\tau_{i-1}}\left(f^*(\x)-  f (x_t) \right)^2~.
    \label{eq:subg-conc}
\end{align}
with probability at least $1-\delta$. We consider both the cases -- when $\cF_m$ is finite and when $\cF_m$ is infinite.

\paragraph{Case 1 -- finite $\cF_m$:}

We take a union bound over all $f \in \cF_m$ in \eqref{eq:subg-conc} to obtain that
\begin{align}
   \forall f \in \cF_m,\quad\quad\sum_{t=1}^{\tau_{i-1}} Z_t^{f} \ge -4\eta \,\tau_{i-1} -4\sigma^2\log\left(\frac{\abs{\cF_m}}{\delta}\right)- \frac{1}{2} \sum_{t=1}^{\tau_{i-1}}\left(f^*(\x)-  f (x_t) \right)^2~.
    \label{eq:finite}
\end{align}
with probability at least $1-\delta$. Note that $\hat f_m^{(i)} \in \cF_m$ for all $i \ge 1$. Then, from \eqref{eq:finite}, we have
\begin{align*}
     \forall i \ge 1,\quad S_{m,3}^{(i)} \ge -4\eta 
    -\frac{4\sigma^2}{\tau_{i-1}} \log\left(\frac{|\cF_m|}{\delta}\right) -\frac{1}{2} S_{m,2}^{(i)}.
\end{align*}
with probability at least $1-\delta$. Now, combining all the three terms together and using a union bound, we obtain the following for any class index $m < m^*$:
\begin{align}
   \forall i \ge 1,\quad T_m^{(i)} &\ge v^{(i)} + (\frac{1}{2}\Delta^2 -4\eta) - 2\sigma \sqrt{\frac{2\log(2^{i}/\delta)}{\tau_{i-1}}} -\mathcal{O}\left( \sigma^2 \sqrt{\frac{\log(2^i/\delta)}{\tau_{i-1}}} \right)\nonumber  \\&\quad\quad-\frac{4\sigma^2}{\tau_{i-1}} \log\left(\frac{|\cF_m|}{\delta}\right).
   \label{eq:finite-non-realizable}
 \end{align}
with probability at least $1-c_3\delta$ ($c_3$ is a positive universal constant).
.

\paragraph{Case 2 -- infinite $\cF_m$:} A similar argument as in \eqref{eq:realizable_lower_infinite} yields
\begin{align}
\forall i \ge 1,\quad T_m^{(i)}&\ge v^{(i)}(\frac{1}{2}\Delta^2 -4\eta) - 2\sigma \sqrt{\frac{2\log(2^{i}/\delta)}{\tau_{i-1}}} - \mathcal{O}\left( \sigma^2 \sqrt{\frac{\log(2^i/\delta)}{\tau_{i-1}}} \right) \nonumber\\
&\quad\quad\quad- \frac{4\sigma^2}{\tau_{i-1} }\log(\cN(\cF_m,\alpha,\norm{\cdot}_\infty)/\delta)\nonumber\\
&\quad\quad\quad - \alpha  \left( 2 \sigma \sqrt{2 \log \left(\frac{2\tau_{i-1}(\tau_{i-1}+1)}{\delta} \right)} + 3\right),
    \label{eq:infinite-non-realizable}
\end{align}
with probability at least $1-c_4\delta$ ($c_4$ is a positive universal constant).


\subsubsection{Proof of Lemma \ref{lem:gen_infinite}}
We are now ready to prove Lemma \ref{lem:gen_infinite}, which presents the model selection Guarantee of \texttt{ABL} for infinite model classes $\lbrace \cF_m \rbrace_{m \in [M]}$. Here, at the same time, we prove a similar (and simpler) result for finite model classes also. 

First, note that we consider doubling epochs $t_i=2^i$, which implies $\tau_{i-1} = \sum_{j=1}^{i-1}t_j = \sum_{j=1}^{i-1} 2^j = 2^i -1$. With this, the number of epochs is given by $N=\lceil \log_2(T+1) -1 \rceil = \mathcal{O}(\log T)$. Let us now consider finite model classes $\lbrace \cF_m \rbrace_{m \in [M]}$.

\paragraph{Case 1 -- finite model classes:}
Here, we combine equations \eqref{eq:realizable}, \eqref{eq:realizable_lower_finite} and \eqref{eq:finite-non-realizable} to obtain a threshold $\gamma_i$. We take a union bound over all $m \in [M]$ and use that $\log(2^i/\delta) \le N\log(2/\delta)$ for all $i$ and $|\cF_m| \le |\cF_{M}|$ for all $m $.
Now, suppose for some epoch $i^*$,  satisfies 
\begin{align*}
    2^{i^*} \ge C \log T \max \left \lbrace \frac{\sigma^4}{(\frac{\Delta^2}{2}-4\eta)^2} \log(1/\delta), \sigma^4 \log\left(\frac{|\mathcal{F}_{M}|}{\delta}\right) \right \rbrace.
\end{align*}
where $C$ is a sufficiently large universal constant and $\Delta \ge 2\sqrt{2\eta}$.

Note that the largest class, $\mathcal{F}_M$ is realizable. Then, we have
\begin{align*}
    T_M^{(i)} \le v^{(i)} + \mathcal{O}\left( \sigma^2 \sqrt{\frac{\log(2^i/\delta)}{\tau_{i-1}}} \right).
\end{align*}
Using the above, let us first consider the non-realizable classes. We now have
\begin{align*}
    \forall m < m^*,\;\forall i \ge i^*,\quad T_m^{(i)} &\ge  v^{(i)} + (\frac{1}{2}\Delta^2 -4\eta) - 2\sigma \sqrt{\frac{2\log(2^{i}/\delta)}{\tau_{i-1}}}\\&\quad \quad -\mathcal{O}\left( \sigma^2 \sqrt{\frac{\log(2^i/\delta)}{\tau_{i-1}}} \right)  -\frac{4\sigma^2}{\tau_{i-1}} \log\left(\frac{|\cF_M|}{\delta}\right).
\end{align*}
Using the definition of $T_M^{(i)}$ and $i^*$, we have
\begin{align*}
    T_m^{(i)} \ge T_M^{(i)} + C_1,
\end{align*}
for some constant $C_1$ (which depends on $C$).

Now, lets consider the realizable classes $m \ge m^*$. We have
\begin{align*}
    T_m^{(i)} \le v^{(i)} + \mathcal{O}\left( \sigma^2 \sqrt{\frac{\log(2^i/\delta)}{\tau_{i-1}}} \right).
\end{align*}
Also, using the fact that $\mathcal{F}_M$ is realizable, and using the lower bound for realizable classes, we have
\begin{align*}
    T_M^{(i)} \ge v^{(i)} - \mathcal{O}\left( \sigma^2 \sqrt{\frac{\log(2^i/\delta)}{\tau_{i-1}}} \right) - \frac{4\sigma^2}{\tau_{i-1} }\log(|\cF_M|/\delta).
\end{align*}
Now, using the definition of $T_M^{(i)}$ and $i^*$, we have
\begin{align*}
    T_m^{(i)} \le T_M^{(i)} + C_1,
\end{align*}
for constant $C_1$.

So, putting the threshold at $\gamma_i = T_M^{(i)} + C_1$, the test succeeds as soon as $i \ge i^*$, with probability at least $1-cM\delta$.

\paragraph{Case 2 -- infinite model classes:}
Here, we combine equations \eqref{eq:realizable}, \eqref{eq:realizable_lower_infinite} and \eqref{eq:infinite-non-realizable} to obtain a threshold $\gamma_i$. Now, suppose for some epoch $i^*$,  satisfies 
\begin{align*}
    2^{i^*} \ge C' \log T \max \left \lbrace \frac{\sigma^4}{(\frac{\Delta^2}{2}-4\eta)^2} \log(1/\delta), \sigma^4 \log\left(\frac{\cN(\cF_M,\alpha,\norm{\cdot}_\infty)}{\delta}\right) \right \rbrace.
\end{align*}
where $C'$ is a sufficiently large universal constant and $\Delta \ge 2\sqrt{2\eta}$. Using a similar argument as above, putting the threshold at $\gamma_i = T_M^{(i)} + C_2$ (where the constant $C_2$ depends on $C'$), the test succeeds as soon as $i \ge i^*$, with probability at least $1-c'M\delta$.

\subsection{Regret Bound of \texttt{ABL} (Proof of Theorem \ref{thm:general})}
Lemma~\ref{lem:gen_infinite} implies that as soon as we reach epoch $i^*$, \texttt{ABL} identifies the model class with high probability, i.e., for each $i \ge i^*$, we have $m^{(i)}=m^*$. However, before that, we do not have any guarantee on the regret performance of \texttt{ABL}. Since at every round the regret can be at most $1$, 
the cumulative regret up until the $i^*$ epoch is upper bounded by $\tau_{i^*-1}$, 
which is at most $\mathcal{O} \left( \sigma^4 \log T \max \left \lbrace \frac{ \log(1/\delta)}{(\frac{\Delta^2}{2}-4\eta)^2} , \log\left(\frac{\cN(\cF_M)}{\delta}\right) \right \rbrace \right)$ if the model classes are infinite, and $\mathcal{O} \left( \sigma^4 \log T \max \left \lbrace \frac{ \log(1/\delta)}{(\frac{\Delta^2}{2}-4\eta)^2} , \log\left(\frac{|\mathcal{F}_{M}|}{\delta}\right) \right \rbrace \right)$, if the model classes are finite. Note that this is the cost we pay for model selection. 

Now, let us bound the regret of \texttt{ABL} from epoch $i^*$ onward. 
Let $R^{\texttt{BL}}(t_i,\delta_i,\cF_{m^{(i)}})$ denote the cumulative regret of \texttt{BL}, when it is run for $k_i$ episodes with confidence level $\delta_i$ for the family $\cF_{m^{(i)}}$. Now, using the result of \cite{russo2013eluder}, we have
\begin{align*}
    R^{\texttt{BL}}(t_i,\delta_i,\cF_m^{(i)}) \le 1+\dim_{\cE}\left(\cF_{m^{(i)}},\frac{1}{t_i}\right) &+4\sqrt{\beta_{t_i}(\cF_{m^{(i)}},\delta_i)\dim_{\cE}\left(\cF_{m^{(i)}},\frac{1}{t_i}\right)t_i }
\end{align*}
with probability at least $1-\delta_i$.
With this and Lemma \ref{lem:gen_infinite}, the regret of \texttt{ABL} after $T$ episodes rounds is given by
\begin{align*}
    R(T) &\leq \tau_{i^*-1} + \sum_{i=i^*}^N R^{\texttt{BL}}(t_i,\delta_i,\cF_{m^{(i)}}) \\
     & \leq \tau_{i^*-1}H + N+ \sum_{i=i^*}^N \dim_{\cE}\left(\cF_{m^*},\frac{1}{t_i}\right) +4 \sum_{i=i^*}^N \sqrt{\beta_{t_i}(\cF_{m^*},\delta_i)\dim_{\cE}\left(\cF_{m^*},\frac{1}{t_i}\right)t_i }.
\end{align*}
The above expression holds with probability at least $1-c'M\delta-2\sum_{i=i^*}^{N}\delta_i$ for infinite model classes, and with probability at least $1-cM\delta-2\sum_{i=i^*}^{N}\delta_i$ for finite model classes. Let us now compute the last term in the above expression. 

 
     We can upper bound the third term in the regret expression as
  \begin{align*}
      \sum_{i=i^*}^N \dim_{\cE}\left(\cF_{m^*},\frac{1}{t_i}\right) \le  N \dim_{\cE}\left(\cF_{m^*},\frac{1}{T}\right) = \cO \left(d_{\cE}(\cF^*) \log T \right), 
  \end{align*}
  where we have used that the total number of epochs $N=\cO(\log T)$.
    Now, notice that, by substituting $\delta_i = \delta/2^i$, we can upper bound $\beta_{t_i}(\cF_{m^*},\delta_i)$ as follows:
  \begin{align*}
      \beta_{t_i}(\cF_{m^*},\delta_i) &=  \mathcal{O}\left(  i \, \log \left( \frac{\cN(\mathcal{F}_{m^*},\frac{1}{t_i },\norm{\cdot}_{\infty})}{\delta}\right) +  \left(1+ \sqrt{i\log \frac{ t_i(
    t_i +1)}{\delta}} \right) \right)\\ & \leq \mathcal{O}\left( \, N \, \log \left( \frac{\cN(\mathcal{F}_{m^*},\frac{1}{T },\norm{\cdot}_{\infty})}{\delta}\right) +  \left(1+ \sqrt{N \log \frac{T}{\delta}} \right) \right) \\
      & \leq \mathcal{O}\left(  \log T \, \log \left( \frac{\cN(\mathcal{F}_{m^*},\frac{1}{T},\norm{\cdot}_{\infty})}{\delta}\right) +   \sqrt{ \log T \log (T/\delta)}  \right)
  \end{align*}
  for infinite model classes, and $\beta_{t_i}(\cF_{m^*},\delta_i) = \cO\left(\log T\log\left(\frac{|\cF_{m^*}|}{\delta} \right) \right)$ for finite model classes.
  With this, the last term in the regret expression can be upper bounded as 
  \begin{align*}
    & \sum_{i=i^*}^N 4\sqrt{\beta_{t_i}(\cF_{m^*},\delta_i)\dim_{\cE}\left(\cF_{m^*},\frac{1}{t_i}\right)t_i }  \\
    & \leq \mathcal{O} \left(\sqrt{  \log T \, \log \left( \frac{\cN(\mathcal{F}_{m^*},\frac{1}{T },\norm{\cdot}_{\infty})}{\delta}\right) + \sqrt{ \log T \log (T/\delta)}  } \,\,\sqrt{\dim_{\cE}\left(\cF_{m^*},\frac{1}{T}\right)} \sum_{i=i^*}^N \sqrt{t_i } \right)\\
    & \leq \mathcal{O} \left(\sqrt{ \log T \, \log \left( \frac{\cN(\mathcal{F}_{m^*},\frac{1}{T},\norm{\cdot}_{\infty})}{\delta}\right) \log\left(\frac{T}{\delta}\right)} \,\,\sqrt{T\,\,\dim_{\cE}\left(\cF_{m^*},\frac{1}{T}\right)}\right)\\
    & = \cO \left(\sqrt{Td_{\cE}(\cF^*)\log (\cN(\cF^*)/\delta)\log T \log(T/\delta)} \right)
  \end{align*}
  for infinite model classes. Similarly, for finite model classes, we can upper bound this term by $\cO \left(\sqrt{Td_{\cE}(\cF^*)\log\left(\frac{|\cF^*|}{\delta} \right)\log T} \right)$. Here, we have used that
\begin{align*}
    \sum_{i=1}^N \sqrt{t_i} & = \sqrt{t_N}\left(1 + \frac{1}{\sqrt{2}} + \frac{1}{2} + \ldots N\text{-th term} \right)\\
    & \leq \sqrt{t_N}\left(1 + \frac{1}{\sqrt{2}} + \frac{1}{2} + ... \right)= \frac{\sqrt{2}}{\sqrt{2} -1} \sqrt{t_N}  \leq \frac{\sqrt{2}}{\sqrt{2} -1} \sqrt{T}.
    \end{align*}
 
 Hence, for infinite model classes, the final regret bound can be written as
\begin{align*}
   R(T) &= \mathcal{O} \left( \sigma^4 (\log T) \max \left \lbrace \frac{ \log(1/\delta)}{(\frac{\Delta^2}{2}-4\eta)^2} , \log\left(\frac{\cN(\mathcal{F}_{M})}{\delta}\right) \right \rbrace \right)\\ &\quad + \cO \left(d_{\cE}(\cF^*) \log T+\sqrt{Td_{\cE}(\cF^*)\log(\cN(\cF^*)/\delta)\log T \log(T/\delta)} \right)~.
\end{align*}
 The above regret bound holds with probability greater than
 \begin{align*}
      1-c'M\delta -\sum_{i=i^*}^{N}\frac{\delta}{2^{i-1}} \ge 1-c'M\delta -\sum_{i\ge 1}\frac{\delta}{2^{i-1}} =1-c'M\delta-2\delta~,
 \end{align*}
 which completes the proof of Theorem \ref{thm:general-rl}.
 
Similarly, for finite model classes, the final regret bound can be written as
\begin{align*}
    R(T) &= \mathcal{O} \left( \sigma^4 \log T \max \left \lbrace \frac{ \log(1/\delta)}{(\frac{\Delta^2}{2}-4\eta)^2} , \log\left(\frac{|\mathcal{F}_{M}|}{\delta}\right) \right \rbrace \right)\\ & \quad+  \cO \left(d_{\cE}(\cF^*) \log T+\sqrt{Td_{\cE}(\cF^*)\log\left(\frac{|\cF^*|}{\delta} \right)\log T} \right),
\end{align*}
which holds with probability greater than $1-cM\delta-2\delta$.
\section{Proof for MDPs}
\label{app:gen-rl}

\subsection{Confidence Sets in \texttt{UCRL-VTR}}
We first describe how the confidence sets are constructed in \texttt{UCRL-VTR}. Note that the procedure is similar to that done in \cite{ayoub2020model}, but with a slight difference. Specifically, we define the confidence width as a function of complexity of the transition family $\cP$ on which $P^*$ lies; rather than the complexity of a value-dependent function class induced by $\cP$ (as done in \cite{ayoub2020model}). We emphasize that this small change makes the model selection procedure easier to understand without any effect on the regret.

Let us define, for any two transition kernels $P,P' \in \cP$ and any episode $k \ge 1$, the following
\begin{align*}
    \cL_k(P) &= \sum_{j=1}^{k}\sum_{h=1}^{H} \left(V_{h+1}^j(s_{h+1}^j)-(PV_{h+1}^j)(s_h^j,a_h^j) \right)^2,\\ \cL_k(P,P') &= \sum_{j=1}^{k}\sum_{h=1}^{H} \left((PV_{h+1}^j)(s_h^j,a_h^j)-( P'V_{h+1}^j)(s_h^j,a_h^j) \right)^2.
\end{align*}
Then, the confidence set at the end of episode $k$ is constructed as 
\begin{align*}
    \cB_k = \left\lbrace P \in \cP \,\big |\, \cL_k(P,\widehat P_k) \le \beta_{k}(\delta)\right \rbrace,
\end{align*}
where $\widehat P_k =\argmin_{P \in \cP}\cL_k(\cP)$ denotes an estimate of $P^*$ after $k$ episodes. The confidence width $\beta_k(\delta)\equiv \beta_{k}(\cP,\delta)$ is set as
\begin{align*}
    \beta_k(\delta):=
     \begin{cases}
     8H^2\log\left(\frac{|\cP|}{\delta}\right) & \text{if}\; \cP\; \text{is finite,}\\
     8H^2\log\left(\frac{2\cN\left(\cP,\frac{1}{k H},\norm{\cdot}_{\infty,1}\right)}{\delta}\right) +  4 H^2 \left(2+\sqrt{2 \log \left(\frac{4kH(kH+1)}{\delta} \right)} \right) & \text{if}\; \cP \; \text{is infinite.}
    \end{cases}
\end{align*}

\begin{lemma}[Concentration of $P^*$]\label{lem:conc-ucrlvtr}
For any $\delta \in (0,1]$, with probability at least $1-\delta$, uniformly over all episodes $k \ge 1$, we have $P^* \in \cB_k$.
\end{lemma}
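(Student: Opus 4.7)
The plan is to follow the standard value-targeted regression analysis, adapted so that the covering is done on the transition family $\cP$ (instead of an induced value-dependent function class, as in \cite{ayoub2020model}). First I would write the targets as $y_h^j := V_{h+1}^j(s_{h+1}^j) = (P^*V_{h+1}^j)(s_h^j,a_h^j) + \epsilon_h^j$, where $\epsilon_h^j$ is a $\cG_h^j$-measurable martingale difference; since $V_{h+1}^j \in [0,H]$, $\epsilon_h^j$ is conditionally zero-mean and $H$-sub-Gaussian (in fact bounded by $H$). Expanding the square yields the key decomposition
\begin{align*}
    \cL_k(P) - \cL_k(P^*) \;=\; \cL_k(P,P^*) + 2\sum_{j=1}^{k}\sum_{h=1}^{H}\epsilon_h^j\bigl((P^*-P)V_{h+1}^j\bigr)(s_h^j,a_h^j).
\end{align*}
Since $\widehat P_k$ minimizes $\cL_k$, we have $\cL_k(\widehat P_k) \le \cL_k(P^*)$, and therefore
\begin{align*}
    \cL_k(\widehat P_k,P^*) \;\le\; 2\sum_{j=1}^{k}\sum_{h=1}^{H}\epsilon_h^j\bigl((\widehat P_k - P^*)V_{h+1}^j\bigr)(s_h^j,a_h^j).
\end{align*}

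Next I would bound the right-hand side uniformly over $P \in \cP$. For any fixed $P$, the variables $\epsilon_h^j ((P-P^*)V_{h+1}^j)(s_h^j,a_h^j)$ form a martingale difference sequence that is conditionally $H|((P-P^*)V_{h+1}^j)(s_h^j,a_h^j)|$-sub-Gaussian. Applying the exponential martingale inequality with any $\lambda > 0$ and then optimizing $\lambda = 1/(4H^2)$ gives, with probability $\ge 1-\delta_0$,
\begin{align*}
    2\sum_{j,h}\epsilon_h^j\bigl((P-P^*)V_{h+1}^j\bigr)(s_h^j,a_h^j) \;\le\; \tfrac{1}{2}\cL_k(P,P^*) + 4H^2\log(1/\delta_0).
\end{align*}
To upgrade this to a uniform statement over all $P \in \cP$, I would invoke a standard union bound in the finite case (setting $\delta_0 = \delta/(|\cP| \cdot k(k+1))$ so the bound also holds uniformly over $k$), or, in the infinite case, pick an $(\alpha,\|\cdot\|_{\infty,1})$-cover $\cP^\alpha$ with $\alpha = 1/(kH)$ and union-bound over this cover; the discretization error is absorbed because $|((P-P^\alpha)V)(s,a)| \le H\alpha$ for any $\|V\|_\infty \le H$, combined with $|\epsilon_h^j| \le H$, contributes an $\cO(H^2)$ correction per time step that matches the additive term in $\beta_k(\delta)$. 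A further union bound over $k \ge 1$ with weights $\delta/(k(k+1))$ yields uniform validity in $k$ at the cost of the $\log(kH(kH+1)/\delta)$ term.

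Combining the two displays, rearranging (the $\tfrac{1}{2}\cL_k(\widehat P_k,P^*)$ term is absorbed on the left), and plugging in the chosen covering scale $\alpha = 1/(kH)$ produces
\begin{align*}
    \cL_k(\widehat P_k,P^*) \;\le\; 8H^2\log\!\Bigl(\tfrac{2\cN(\cP,1/(kH),\|\cdot\|_{\infty,1})}{\delta}\Bigr) + 4H^2\!\left(2 + \sqrt{2\log\tfrac{4kH(kH+1)}{\delta}}\right) = \beta_k(\delta),
\end{align*}
which is exactly the condition defining membership of $P^*$ in $\cB_k$. The main obstacle I expect is the combination of two union bounds (over $\cP^\alpha$ and over episodes $k$) together with careful bookkeeping of the discretization error so that the final confidence width matches the stated $\beta_k(\delta)$; the exponent choice $\lambda = 1/(4H^2)$ is what makes the cross term be absorbed by $\tfrac{1}{2}\cL_k(\widehat P_k, P^*)$, and getting the constants right to land on $8H^2$ in front of the entropy term is the step requiring the most attention.
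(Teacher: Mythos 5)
Your proposal is correct and follows essentially the same route as the paper's proof: the same expansion of $\cL_k(P)-\cL_k(P^*)$ into a quadratic term plus a cross-term martingale, the same offset (self-normalized) martingale bound that absorbs $\tfrac{1}{2}\cL_k(\widehat P_k,P^*)$, and the same union bound over $\cP$ (or an $(\alpha,\norm{\cdot}_{\infty,1})$-cover with $\alpha=1/(kH)$) with the discretization error controlled via $|\epsilon_h^j|\le H\sqrt{2\log(2kH(kH+1)/\delta)}$. The only quibble is a constant: working with $\epsilon_h^j\bigl((P-P^*)V_{h+1}^j\bigr)$ (variance proxy $H^2a^2$) you need $\lambda=1/(2H^2)$, not $1/(4H^2)$, to land on $\tfrac12\cL_k(P,P^*)+4H^2\log(1/\delta_0)$; the paper's $\lambda=-1/(4H^2)$ applies to the cross term with the factor $2$ already included.
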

\begin{proof}
First, we define, for any fixed $P \in \cP$ and $(h,k) \in [H] \times [K]$, the quantity
\begin{align*}
    Z_h^{k,P} := 2 \left(( P^* V_{h+1}^k)(s_h^k,a_h^k)-(P V_{h+1}^k)(s_h^k,a_h^k) \right)\left( V_{h+1}^k(s_{h+1}^k)-( P^* V_{h+1}^k)(s_h^k,a_h^k)\right).
\end{align*}
Then, we have
\begin{align}
    \cL_k(\widehat P_k) = \cL_k(P^*) +\cL_k(P^*,\widehat P_{k}) + \sum_{j=1}^{k}\sum_{h=1}^{H}Z_h^{j,\widehat P_k}~.
    \label{eq:break-up-rl}
\end{align}
Using the notation $y_h^k=V_{h+1}^k(s_{h+1}^k)$, we can rewrite $Z_h^{k,P}$ as
\begin{align*}
    Z_h^{k,P} := 2 \left(( P^* V_{h+1}^k)(s_h^k,a_h^k)-(P V_{h+1}^k)(s_h^k,a_h^k) \right)\left( y_h^k-\E[y_h^k|\cG_{h-1}^k]\right),
\end{align*}
where $\cG_{h-1}^k$ denotes the $\sigma$-field summarising all the information available just before $s_{h+1}^k$ is observed.
Note that $Z_h^{k,P}$ is $\cG_h^k$-measurable,  Moreover, since $V_{h+1}^k \in [0,H]$, $Z_h^{k,P}$ is $2H |( P^* V_{h+1}^k)(s_h^k,a_h^k)-(P V_{h+1}^k)(s_h^k,a_h^k)|$-sub-Gaussian conditioned on $\cG_{h-1}^k$. Therefore, for any $\lambda < 0$, with probability at least $1-\delta$, we have
\begin{align*}
    \forall k \ge 1,\quad\sum_{j=1}^{k}\sum_{h=1}^{H} Z_h^{j,P} \ge \frac{1}{\lambda}\log(1/\delta)+ \frac{\lambda}{2}\cdot 4H^2 \sum_{j=1}^{k}\sum_{h=1}^{H}\left(( P^* V_{h+1}^j)(s_h^j,a_h^j)-(P V_{h+1}^j)(s_h^j,a_h^j) \right)^2~.
\end{align*}
Setting $\lambda = -1/(4H^2)$, we obtain for any fixed $P \in \cP$, the following:
\begin{align}
    \forall k \ge 1,\quad\sum_{j=1}^{k}\sum_{h=1}^{H} Z_h^{j,P} \ge -4H^2\log\left(\frac{1}{\delta}\right)- \frac{1}{2} \cL_k(P^*,P)~.
    \label{eq:subg-conc-rl}
\end{align}
with probability at least $1-\delta$. We consider both the cases -- when $\cP$ is finite and when $\cP$ is infinite.

\paragraph{Case 1 -- finite $\cP$:}
We take a union bound over all $P \in \cP$ in \eqref{eq:subg-conc-rl} to obtain that
\begin{align}
    \forall k \ge 1,\;\; \forall P \in \cP,\quad\sum_{j=1}^{k}\sum_{h=1}^{H} Z_h^{j,P} \ge -4H^2\log\left(\frac{|\cP|}{\delta}\right)- \frac{1}{2} \cL_k(P^*,P)
    \label{eq:finite-rl}
\end{align}
with probability at least $1-\delta$. By construction, $\widehat P_k \in \cP$ and $\cL_k(\widehat P_k) \le \cL_k(P^*)$. Therefore, from \eqref{eq:break-up-rl}, we have
\begin{align*}
   \forall k \ge 1,\;\;  \cL_k(P^*,\widehat P_k) \le 8H^2\log\left(\frac{|\cP|}{\delta}\right)
\end{align*}
with probability at least $1-\delta$, which proves the result for finite $\cP$.

\paragraph{Case 2 -- infinite $\cP$:}

Fix some $\alpha > 0$. Let $\cP^\alpha$ denotes an $(\alpha,\norm{\cdot}_{\infty,1})$ cover of $\cP$, i.e., for any $P \in \cP$, there exists an $P^\alpha$ in $\cP^\alpha$ such that $\norm{P^\alpha - P}_{\infty,1}:=\sup_{s,a}\int_{\cS}|P^\alpha(s'|s,a)-P(s'|s,a)|ds' \le \alpha$. Now, we take a union bound over all $P^\alpha \in \cP^\alpha$ in \eqref{eq:subg-conc-rl} to obtain that
\begin{align*}
    \forall k \ge 1,\;\; \forall P^\alpha \in \cP^\alpha,\quad\sum_{j=1}^{k}\sum_{h=1}^{H} Z_h^{j,P^\alpha} \ge -4H^2\log\left(\frac{|\cP^\alpha|}{\delta}\right)- \frac{1}{2} \cL_k(P^*,P^\alpha)~.
\end{align*}
with probability at least $1-\delta$, and thus, in turn, 
\begin{align}
    \forall k \ge 1,\;\; \forall P\in \cP,\quad\sum_{j=1}^{k}\sum_{h=1}^{H} Z_h^{j,P} \ge -4H^2\log\left(\frac{|\cP^\alpha|}{\delta}\right)- \frac{1}{2} \cL_k(P^*,P)+\zeta_k^{\alpha}(P).
    \label{eq:comb-one-conf-rl}
\end{align}
with probability at least $1-\delta$,
where $\zeta_k^{\alpha}(P)$ denotes the discretization error:
\begin{align*}
    &\zeta_k^{\alpha}(P)\\ =& \sum_{j=1}^{k}\sum_{h=1}^{H} \left(Z_h^{j,P}-  Z_h^{j,P^\alpha}\right) + \frac{1}{2}\cL_k(P^*,P) - \frac{1}{2}\cL_k(P^*,P^\alpha)\\
    =& \sum_{j=1}^{k}\sum_{h=1}^{H} \bigg(2y_h^j \left(( P^\alpha V_{h+1}^j)(s_h^j,a_h^j)-(P V_{h+1}^j)(s_h^j,a_h^j) \right) + \frac{1}{2}(P V_{h+1}^j)^2(s_h^j,a_h^j) \\
    & \qquad \qquad - \frac{1}{2}(P^\alpha V_{h+1}^j)^2(s_h^j,a_h^j)\bigg).
\end{align*}
Since $\norm{P-P^\alpha}_{\infty,1} \le \alpha$ and $\norm{V_{h+1}^k}_\infty \le H$, we have 
\begin{align*}
 \left |(P^\alpha V_{h+1}^k)(s,a)-(PV_{h+1}^k)(s,a)\right| \le \alpha H~,   
\end{align*}
which further yields
\begin{align*}
    \left|(P^\alpha V_{h+1}^k)^2(s,a)- (P V_{h+1}^k)^2(s,a)\right| &\le \max_{|\xi| \le \alpha H} \left | \left((PV_{h+1}^k)(s,a)+\xi\right)^2-(PV_{h+1}^k)(s,a)^2 \right|\\ &\le 2 \alpha H^2 +\alpha^2H^2~.
\end{align*}
Therefore, we can upper bound the discretization error as
\begin{align*}
    |\zeta_k^{\alpha}(P)| &\le 2 \alpha H \sum_{j=1}^{k}\sum_{h=1}^{H}|y_h^j| + \sum_{j=1}^{k}\sum_{h=1}^{H} \left(\alpha H^2 + \frac{\alpha^2H^2}{2}\right)\\ &\le 2 \alpha H \sum_{j=1}^{k}\sum_{h=1}^{H}|y_h^j-\E[y_h^j|\cG_{h-1}^j]| + \sum_{j=1}^{k}\sum_{h=1}^{H} \left(3\alpha H^2 + \frac{\alpha^2H^2}{2}\right)~.
\end{align*}
Since $y_h^k-\E[y_h^k|\cG_{h-1}^k]$ is $H$-sub-Gaussian conditioned on $\cG_{h-1}^k$, we have 
\begin{align*}
    \forall k \ge 1,\forall h \in [H],\quad |y_h^k-\E[y_h^k|\cG_{h-1}^k]| \le H\sqrt{2 \log \left(\frac{2kH(kH+1)}{\delta} \right)}
\end{align*}
with probability at least $1-\delta$.
Therefore, with probability at least $1-\delta$, the discretization error is bounded for all episodes $k \ge 1$ as
\begin{align*}
    |\zeta_k^{\alpha}(P)| &\le kH \left( 2\alpha H^2 \sqrt{2 \log \left(\frac{2kH(kH+1)}{\delta} \right)} + 3\alpha H^2+\frac{\alpha^2H^2}{2}\right)\\
    & \le \alpha kH \left( 2 H^2 \sqrt{2 \log \left(\frac{2kH(kH+1)}{\delta} \right)} + 4H^2\right),
\end{align*}
where the last step holds for any $\alpha \leq 1$. Therefore, from \eqref{eq:comb-one-conf-rl}, we have
\begin{align}
   \forall k \ge 1,\;\; \forall P\in \cP,\quad\sum_{j=1}^{k}\sum_{h=1}^{H} Z_h^{j,P} &\ge -4H^2\log\left(\frac{|\cP^\alpha|}{\delta}\right)- \frac{1}{2} \cL_k(P^*,P)\nonumber\\ &  \quad \quad- \alpha kH \left( 2 H^2 \sqrt{2 \log \left(\frac{2kH(kH+1)}{\delta} \right)} + 4H^2\right),
   \label{eq:infinite-rl}
\end{align}
with probability at least $1-2\delta$.
Now, setting $\alpha=\frac{1}{k H}$, we obtain, from \eqref{eq:break-up-rl}, that
\begin{align*}
    \forall k \ge 1,\; \cL_k(P^*,\hat P_{k}) \le 8H^2\log\left(\frac{2\cN\left(\cP,\frac{1}{k H},\norm{\cdot}_{\infty,1}\right)}{\delta}\right) +  4 H^2 \left(2+\sqrt{2 \log \left(\frac{4kH(kH+1)}{\delta} \right)} \right)
\end{align*}
with probability at least $1-\delta$, which proves the result for infinite $\cP$.
\end{proof}

\subsection{Model Selection in \texttt{ARL}}

First, we find concentration bounds on the test statistics $T_m^{(i)}$ for all epochs $i \ge 1$ and class indexes $m \in [M]$, which are crucial to prove the model selection guarantee of \texttt{ARL}.

\paragraph{1. Realizable model classes:} Fix a class index $m \ge m^*$. In this case, the true model $P^* \in \cP_m$. Therefore, the we can upper bound the empirical risk at epoch $i$ as
\begin{align*}
    T_m^{(i)}&=\frac{1}{\tau_{i-1}H}\sum_{k=1}^{\tau_{i-1}}\sum_{h=1}^{H} \left(V_{h+1}^k(s_{h+1}^k)-( \widehat{P}^{(i)}_m V_{h+1}^k)(s_h^k,a_h^k) \right)^2\\ &\le \frac{1}{\tau_{i-1}H}\sum_{k=1}^{\tau_{i-1}}\sum_{h=1}^{H} \left(V_{h+1}^k(s_{h+1}^k)-( P^* V_{h+1}^k)(s_h^k,a_h^k) \right)^2\\
    &= \frac{1}{\tau_{i-1}H}\sum_{k=1}^{\tau_{i-1}}\sum_{h=1}^{H} \left( y_h^k - \E[y_h^k|\cG_{h-1}^k]\right)^2.
\end{align*}
Now, we define the random variable $m_h^k := \left( y_h^k - \E[y_h^k|\cG_{h-1}^k]\right)^2$. Note that $\E[m_h^k|\cG_{h-1}^k] = \Var[y_h^k|\cG_{h-1}^k]$. Moreover, $(m_h^k-\E[m_h^k|\cG_{h-1}^k])_{k,h}$ is a martingale difference sequence adapted to the filtration $\cG_h^k$, with absolute values $|m_h^k-\E[m_h^k|\cG_{h-1}^k]| \le H^2$ for all $k,h$. Therefore, by the  Azuma-Hoeffding inequality, with probability at least $1-\delta/2^{i}$,
\begin{align*}
    \sum_{k=1}^{\tau_{i-1}}\sum_{h=1}^H m_h^k \le  \sum_{k=1}^{\tau_{i-1}}\sum_{h=1}^H \E[m_h^k|\cG_{h-1}^k] + H^2\sqrt{2\tau_{i-1}H\log(2^{i}/\delta)}.
\end{align*}
Now, using a union bound, along with the definition of $T^{(i)}_m$, with probability at least $1-\delta$, we have
\begin{align}
\forall i \ge 1,\quad  T_m^{(i)} \le v^{(i)}  + H^{3/2}\sqrt{\frac{2\log(2^{i}/\delta)}{\tau_{i-1}}}, 
  \label{eq:realizable-rl}  
\end{align}
where $v^{(i)}=\frac{1}{\tau_{i-1}H}\sum_{k=1}^{\tau_{i-1}}\sum_{h=1}^{H}\Var[y_h^k|\cG_{h-1}^k]$.

Let us now find a lower bound of $T^{(i)}_m$. We can decompose it as  $T_m^{(i)}=T^{(i)}_{m,1}+T^{(i)}_{m,2}+T^{(i)}_{m,3}$, where
\begin{align*}
    T^{(i)}_{m,1}  &= \frac{1}{\tau_{i-1}H}\sum_{k=1}^{\tau_{i-1}}\sum_{h=1}^{H}  \left(V_{h+1}^k(s_{h+1}^k)-( P^* V_{h+1}^k)(s_h^k,a_h^k) \right)^2,\\
    T^{(i)}_{m,2} & = \frac{1}{\tau_{i-1}H}\sum_{k=1}^{\tau_{i-1}}\sum_{h=1}^{H} \left(( P^* V_{h+1}^k)(s_h^k,a_h^k)-( \hat P^{(i)}_m V_{h+1}^k)(s_h^k,a_h^k) \right)^2,\\
    T^{(i)}_{m,3} &= \frac{1}{\tau_{i-1}H}\sum_{k=1}^{\tau_{i-1}}\sum_{h=1}^{H} 2 \left(( P^* V_{h+1}^k)(s_h^k,a_h^k)-( \hat P^{(i)}_m V_{h+1}^k)(s_h^k,a_h^k) \right)\left( V_{h+1}^k(s_{h+1}^k) - (P^*V_{h+1}^k)(s_h^k,a_h^k)\right).
\end{align*}
First, using a similar argument as in \eqref{eq:realizable-rl}, with probability at least $1-\delta$, we obtain
\begin{align*}
    \forall i \ge 1, \quad T^{(i)}_{m,1} \ge v^{(i)} - H^{3/2}\sqrt{\frac{2\log(2^{i}/\delta)}{\tau_{i-1}}}.
\end{align*}
Next, trivially, we have \begin{align*}
\forall i \ge 1, \quad    T^{(i)}_{m,2} \ge 0.
\end{align*}
Now, we turn to bound the term $T^{(i)}_{m,3}$. We consider both the cases -- when $\cP$ is finite and when $\cP$ is infinite.

\paragraph{Case 1 -- finite model classes:}
Note that $\hat P_m^{(i)} \in \cP_m$. Then, from \eqref{eq:finite-rl}, we have
\begin{align*}
     \forall i \ge 1,\quad T_{m,3}^{(i)} \ge 
    -\frac{4H}{\tau_{i-1}} \log\left(\frac{|\cP_m|}{\delta}\right) -\frac{1}{2} T_{m,2}^{(i)}
\end{align*}
with probability at least $1-\delta$. Now, combining all the three terms together and using a union bound, we obtain 
\begin{align}
   \forall i \ge 1,\quad T_m^{(i)} \ge v^{(i)} -  H^{3/2}\sqrt{\frac{2\log(2^{i}/\delta)}{\tau_{i-1}}}  -\frac{4H}{\tau_{i-1}} \log\left(\frac{|\cP_m|}{\delta}\right).
   \label{eq:finite-realizable-lower-rl}
 \end{align}
with probability at least $1-c_1\delta$ (where $c_1$ is a positive universal constant).

\paragraph{Case 2 -- infinite model classes:}
We follow a similar procedure, using \eqref{eq:infinite-rl}, to obtain
\begin{align}
     \forall i \ge 1,\quad T_{m}^{(i)} &\ge v^{(i)} - H^{3/2}\sqrt{\frac{2\log(2^{i}/\delta)}{\tau_{i-1}}}
    -\frac{4H}{\tau_{i-1}} \log\left(\frac{\cN(\cP_m,\alpha,\norm{\cdot}_{\infty,1})}{\delta}\right)\nonumber\\ &\quad\quad- \alpha \left( 2 H^2 \sqrt{2 \log \left(\frac{2\tau_{i-1}H(\tau_{i-1}H+1)}{\delta} \right)} + 4H^2\right)
    \label{eq:infinite-realizable-lower-rl}
\end{align}
with probability at least $1-c_2\delta$ (where $c_2$ is a positive universal constant).


\paragraph{2. Non-realizable model classes:} Fix a class index $m < m^*$. In this case, the true model $P^* \notin \cP_m$. We can decompose the empirical risk at epoch $i$ as $T_m^{(i)}=T^{(i)}_{m,1}+T^{(i)}_{m,2}+T^{(i)}_{m,3}$, where
\begin{align*}
    S^{(i)}_{m,1}  &= \frac{1}{\tau_{i-1}H}\sum_{k=1}^{\tau_{i-1}}\sum_{h=1}^{H}  \left(V_{h+1}^k(s_{h+1}^k)-( P^* V_{h+1}^k)(\tilde s_h^k,\tilde a_h^k) \right)^2,\\
    S^{(i)}_{m,2} & = \frac{1}{\tau_{i-1}H}\sum_{k=1}^{\tau_{i-1}}\sum_{h=1}^{H} \left(( P^* V_{h+1}^k)(\tilde s_h^k,\tilde a_h^k)-( \hat P^{(i)}_m V_{h+1}^k)(s_h^k,a_h^k) \right)^2,\\
    S^{(i)}_{m,3} &= \frac{1}{\tau_{i-1}H}\sum_{k=1}^{\tau_{i-1}}\sum_{h=1}^{H} 2 \left(( P^* V_{h+1}^k)(\tilde s_h^k,\tilde a_h^k)-( \hat P^{(i)}_m V_{h+1}^k)(s_h^k,a_h^k) \right)\left( V_{h+1}^k(s_{h+1}^k) - (P^*V_{h+1}^k)(\tilde s_h^k,\tilde a_h^k)\right),
\end{align*}
where $(\tilde s_h^k,\tilde a_h^k)$ satisfies
\begin{align*}
    D^*\left((s_h^k,a_h^k),(\tilde s_h^k,\tilde a_h^k)\right)=\abs{(P^*V_{h+1}^k)( s_h^k, a_h^k)-(P^*V_{h+1}^k)(\tilde s_h^k,\tilde a_h^k)}\leq \eta.
\end{align*}
To bound the first term, note that
\begin{align*}
   &\left(V_{h+1}^k(s_{h+1}^k)-( P^* V_{h+1}^k)(\tilde s_h^k,\tilde a_h^k) \right)^2\\ &\ge \left(V_{h+1}^k(s_{h+1}^k)-( P^* V_{h+1}^k)( s_h^k, a_h^k) \right)^2 \\ &+2\left((P^*V_{h+1}^k)( s_h^k, a_h^k)-(P^*V_{h+1}^k)(\tilde s_h^k,\tilde a_h^k)\right)\left(V_{h+1}^k(s_{h+1}^k)-( P^* V_{h+1}^k)( s_h^k, a_h^k) \right)\\
   & = m_h^k+2\left((P^*V_{h+1}^k)( s_h^k, a_h^k)-(P^*V_{h+1}^k)(\tilde s_h^k,\tilde a_h^k)\right) \left( y_h^k - \E[y_h^k|\cG_{h-1}^k]\right).
\end{align*}
Define $Y_h^k=2\left((P^*V_{h+1}^k)( s_h^k, a_h^k)-(P^*V_{h+1}^k)(\tilde s_h^k,\tilde a_h^k)\right) \left( y_h^k - \E[y_h^k|\cG_{h-1}^k]\right)$. Note that $Y_h^k$ is zero-mean $2H^2$ sub-Gaussian conditioned on $\cG_{h-1}^{k}$. Therefore, by Azuma-Hoeffding inequality and using a similar argument as in \eqref{eq:realizable-rl}, with probability at least $1-\delta$, we obtain
\begin{align*}
    \forall i \ge 1, \quad S^{(i)}_{m,1} \ge v^{(i)} - O\left(H^{3/2}\sqrt{\frac{2\log(2^{i}/\delta)}{\tau_{i-1}}}\right).
\end{align*}
The second term is bounded by Assumption \ref{ass:sep-rl} as 
\begin{align*}
\forall i \ge 1, \quad    S^{(i)}_{m,2} \ge \Delta^2~.
\end{align*}
Now, we turn to bound the third term $T^{(i)}_{m,3}$. Note that 
\begin{align*}
& 2\left(( P^* V_{h+1}^k)(\tilde s_h^k,\tilde a_h^k)-( \hat P^{(i)}_m V_{h+1}^k)(s_h^k,a_h^k) \right)\left( V_{h+1}^k(s_{h+1}^k) - (P^*V_{h+1}^k)(\tilde s_h^k,\tilde a_h^k)\right)\\
&= 2 \left(( P^* V_{h+1}^k)(\tilde s_h^k,\tilde a_h^k)-( \hat P^{(i)}_m V_{h+1}^k)(s_h^k,a_h^k) \right)\left( V_{h+1}^k(s_{h+1}^k) - (P^*V_{h+1}^k)( s_h^k, a_h^k)\right)\\
&\quad\quad\quad -2 \left(( P^* V_{h+1}^k)(\tilde s_h^k,\tilde a_h^k)-( \hat P^{(i)}_m V_{h+1}^k)(s_h^k,a_h^k) \right)\left((P^*V_{h+1}^k)( \tilde s_h^k, \tilde a_h^k)-(P^*V_{h+1}^k)( s_h^k, a_h^k)\right)\\
& \geq 2 \left(( P^* V_{h+1}^k)(\tilde s_h^k,\tilde a_h^k)-( \hat P^{(i)}_m V_{h+1}^k)(s_h^k,a_h^k) \right)\left( V_{h+1}^k(s_{h+1}^k) - (P^*V_{h+1}^k)( s_h^k, a_h^k)\right) - 2H\eta
\end{align*}

We now consider both the cases -- when $\cP$ is finite and when $\cP$ is infinite.

\paragraph{Case 1 -- finite model classes:}
Note that $\hat P_m^{(i)} \in \cP_m$. Then, from \eqref{eq:finite-rl}, we have
\begin{align*}
     \forall i \ge 1,\quad S_{m,3}^{(i)} \ge 
    -\frac{4H}{\tau_{i-1}} \log\left(\frac{|\cP_m|}{\delta}\right) -\frac{1}{2} T_{m,2}^{(i)} -2H\eta
\end{align*}
with probability at least $1-\delta$. Now, combining all the three terms together and using a union bound, we obtain 
\begin{align}
   \forall i \ge 1,\quad T_m^{(i)} \ge v^{(i)} + (\frac{1}{2}\Delta^2-2H\eta) - O\left(H^{3/2}\sqrt{\frac{2\log(2^{i}/\delta)}{\tau_{i-1}}}\right)  -\frac{4H}{\tau_{i-1}} \log\left(\frac{|\cP_m|}{\delta}\right) .
   \label{eq:finite-non-realizable-rl}
 \end{align}
with probability at least $1-c_3\delta$.

\paragraph{Case 2 -- infinite model classes:}
We follow a similar procedure, using \eqref{eq:infinite-rl}, to obtain 
\begin{align}
     \forall i \ge 1,\quad T_{m}^{(i)} &\ge v^{(i)} + (\frac{1}{2}\Delta^2-2H\eta) - O\left( H^{3/2}\sqrt{\frac{2\log(2^{i}/\delta)}{\tau_{i-1}}}\right) \\
     &\qquad-\frac{4H}{\tau_{i-1}} \log\left(\frac{\cN(\cP_m,\alpha,\norm{\cdot}_{\infty,1})}{\delta}\right)\nonumber\\ &\quad\quad- \alpha \left( 2 H^2 \sqrt{2 \log \left(\frac{2\tau_{i-1}H(\tau_{i-1}H+1)}{\delta} \right)} + 4H^2\right).
    \label{eq:infinite-non-realizable-rl}
\end{align}
with probability at least $1-c_4\delta$. 

\subsubsection{Proof of Lemma \ref{lem:gen_infinite-rl}}
We are now ready to prove Lemma \ref{lem:gen_infinite-rl}, which presents the model selection Guarantee of \texttt{ARL} for infinite model classes $\lbrace \cP_m \rbrace_{m \in [M]}$. Here, at the same time, we prove a similar (and simpler) result for finite model classes also. 

First, note that we consider doubling epochs $k_i=2^i$, which implies $\tau_{i-1} = \sum_{j=1}^{i-1}k_j = \sum_{j=1}^{i-1} 2^j = 2^i -1$. With this, the number of epochs is given by $N=\lceil \log_2(K+1) -1 \rceil = \mathcal{O}(\log K)$. 

\paragraph{Case 1 -- finite model classes:}

Let us now consider finite model classes $\lbrace \cP_m \rbrace_{m \in [M]}$.

First, we combine \eqref{eq:realizable-rl}, \eqref{eq:finite-realizable-lower-rl}
 and \eqref{eq:finite-non-realizable-rl}, and take a union bound over all $m \in [M]$ to obtain
\begin{align*}
    &\forall m \ge m^*,\;\forall i \ge 1, \, v^{(i)}  - O\left(H^{3/2}\sqrt{\frac{2N\log(2/\delta)}{\tau_{i-1}}}\right)-\frac{4H}{\tau_{i-1}} \log\left(\frac{|\cP_M|}{\delta}\right) \le T_m^{(i)}\\&\quad\quad\quad\quad \le v^{(i)}  + O\left( H^{3/2}\sqrt{\frac{2N\log(2/\delta)}{\tau_{i-1}}}\right),\\
    &\forall m \le m^*-1,\;\forall i \ge 1,\quad T_m^{(i)} \ge v^{(i)} + (\frac{1}{2}\Delta^2-2H\eta)\\\quad\quad\quad & - O\left(H^{3/2}\sqrt{\frac{2N\log(2/\delta)}{\tau_{i-1}}}  -\frac{4H}{\tau_{i-1}} \log\left(\frac{|\cP_M|}{\delta}\right) \right)
\end{align*}
with probability at least $1-cM\delta$, where we have used that $\log(2^i/\delta) \le N\log(2/\delta)$ for all $i$ and $|\cP_m| \le |\cP_M|$ for all $m$.
Now, suppose for some epoch $i^*$, satisfies 
\begin{align*}
    2^{i^*} \geq C'\log K\max \left \lbrace \frac{2H^3 }{(\frac{1}{2}\Delta^2-2H\eta)^2} \log(2/\delta), 4H \log\left(\frac{|\mathcal{P}_M|}{\delta}\right) \right \rbrace,
\end{align*}
where $C$ is a sufficiently large universal constant and $\Delta \ge 2\sqrt{H\eta}$. 

Then, using the upper bound on $T_M^{(i)}$, we have 
\begin{align*}
    \forall m \le m^*-1,\;\forall i \ge i^*,\quad T_m^{(i)} \ge T_M^{(i)} + C_2
\end{align*}
for some sufficiently small constant $C_2$ (which depends on $C'$).
Similarly, using the lower bound on $T_M^{(i)}$, we have 
\begin{align*}
    \forall m \ge m^*,\;\forall i \ge i^*,\quad T_m^{(i)} \le T_M^{(i)} + C_2.
\end{align*}


The above equation implies that $m^{(i)}=m^*$ for all epochs $i \ge i^*$ with probability at least $1-cM\delta$.

\paragraph{Case 2 -- infinite model classes:} 

Now, we focus on infinite model classes $\lbrace \cP_m \rbrace_{m \in [M]}$.

First, we combine \eqref{eq:realizable-rl}, \eqref{eq:infinite-realizable-lower-rl},
 and \eqref{eq:infinite-non-realizable-rl}, and take a union bound over all $m \in [M]$ to obtain
\begin{align*}
    & \forall m \ge m^*,\;\forall i \ge 1,v^{(i)}  - O\left( H^{3/2}\sqrt{\frac{2N\log(2/\delta)}{\tau_{i-1}}}\right)  -\frac{4H}{\tau_{i-1}} \log\left(\frac{\cN(\cP_M,\alpha,\norm{\cdot}_{\infty,1})}{\delta}\right)\\ &\quad\quad- \alpha \left( 2 H^2 \sqrt{2 \log \left(\frac{2KH(KH+1)}{\delta} \right)} + 4H^2\right) \leq  T_m^{(i)} \le v^{(i)}  + O\left(H^{3/2}\sqrt{\frac{2N\log(2/\delta)}{\tau_{i-1}}}\right),\\
    & \forall m \le m^*-1,\;\forall i \ge 1,\quad T_m^{(i)} \ge   v^{(i)}+ (\frac{1}{2}\Delta^2-2H\eta) - O\left(H^{3/2}\sqrt{\frac{2N\log(2/\delta)}{\tau_{i-1}}}\right)  \\&\quad\quad-\frac{4H}{\tau_{i-1}} \log\left(\frac{\cN(\cP_M,\alpha,\norm{\cdot}_{\infty,1})}{\delta}\right)\nonumber - \alpha \left( 2 H^2 \sqrt{2 \log \left(\frac{2K H(KH+1)}{\delta} \right)} + 4H^2\right)
\end{align*}
with probability at least $1-cM\delta$. Suppose for some epoch $i^*$, we have,
\begin{align*}
    2^{i^*} \geq C'\log K \max \left \lbrace \frac{H^3 }{(\frac{1}{2}\Delta^2-2H\eta)^2} \log(2/\delta), 4H \log\left(\frac{\cN(\cP_M,\alpha,\norm{\cdot}_{\infty,1})}{\delta}\right) \right \rbrace
\end{align*}
where $C'>1$ is a sufficiently large universal constant. Then, with the choice of threshold $\gamma_i$, and doing the same calculation as above, we obtain
\begin{align*}
    \forall m \ge m^*,\;\forall i \ge i^*,\quad T_m^{(i)} &\le \gamma_i \; \text{and}\\
    \forall m \le m^*-1,\;\forall i \ge i^*,\quad T_m^{(i)} &\ge \gamma_i
\end{align*}
with probability at least $1-c'M\delta$. The above equation implies that $m^{(i)}=m^*$ for all epochs $i \ge i^*$, proving the result.

\subsection{Regret Bound of \texttt{ARL} (Proof of Theorem \ref{thm:general-rl})}
Lemma~\ref{lem:gen_infinite-rl} implies that as soon as we reach epoch $i^*$, \texttt{ARL} identifies the model class with high probability, i.e., for each $i \ge i^*$, we have $m^{(i)}=m^*$. However, before that, we do not have any guarantee on the regret performance of \texttt{ARL}. Since at every episode the regret can be at most $H$, 
the cumulative regret up until the $i^*$ epoch is upper bounded by $\tau_{i^*-1}H$, 
which is at most $\mathcal{O} \left(\log K \max \left \lbrace \frac{H^4  \log(1/\delta)}{(\frac{\Delta^2}{2}-2H\eta)^2}, H^2 \log\left(\frac{\cN(\cP_M)}{\delta}\right) \right \rbrace \right)$, if the model classes are infinite, and $\mathcal{O} \left(\log K \max \left \lbrace \frac{H^4  \log(1/\delta)}{(\frac{\Delta^2}{2}-2H\eta)^2}, H^2 \log\left(\frac{|\cP_M|}{\delta}\right) \right \rbrace \right)$, if the model classes are finite. Note that this is the cost we pay for model selection. 

Now, let us bound the regret of \texttt{ARL} from epoch $i^*$ onward. 
Let $R^{\texttt{UCRL-VTR}}(k_i,\delta_i,\cP_{m^{(i)}})$ denote the cumulative regret of \texttt{UCRL-VTR}, when it is run for $k_i$ episodes with confidence level $\delta_i$ for the family $\cP_{m^{(i)}}$. Now, using the result of \cite{ayoub2020model}, we have
\begin{align*}
    R^{\texttt{UCRL-VTR}}(k_i,\delta_i,\cP_m^{(i)}) &\le 1+H^2\dim_{\cE}\left(\cM_{m^{(i)}},\frac{1}{k_iH}\right) \\&+4\sqrt{\beta_{k_i}(\cP_{m^{(i)}},\delta_i)\dim_{\cE}\left(\cM_{m^{(i)}},\frac{1}{k_iH}\right)k_i H}+H\sqrt{2k_iH\log(1/\delta_i)}
\end{align*}
with probability at least $1-2\delta_i$.
With this and Lemma \ref{lem:gen_infinite-rl}, the regret of \texttt{ARL} after $K$ episodes (i.e., after $T=KH$ timesteps) is given by
\begin{align*}
    R(T) &\leq \tau_{i^*-1}H + \sum_{i=i^*}^N R^{\texttt{UCRL-VTR}}(k_i,\delta_i,\cP_{m^{(i)}}) \\
     & \leq \tau_{i^*-1}H + N+ \sum_{i=i^*}^N H^2\dim_{\cE}\left(\cM_{m^*},\frac{1}{k_iH}\right)\\& +4 \sum_{i=i^*}^N \sqrt{\beta_{k_i}(\cP_{m^*},\delta_i)\dim_{\cE}\left(\cM_{m^*},\frac{1}{k_iH}\right)k_i H}+ \sum_{i=i^*}^N H\sqrt{2k_iH\log(1/\delta_i)}~.
\end{align*}
The above expression holds with probability at least $1-c'M\delta-2\sum_{i=i^*}^{N}\delta_i$ for infinite model classes, and with probability at least $1-cM\delta-2\sum_{i=i^*}^{N}\delta_i$ for finite model classes. Let us now compute the last term in the above expression. Substituting $\delta_i = \delta/2^i$, we have
\begin{align*}
    \sum_{i=i^*}^N H\sqrt{2k_iH\log(1/\delta_i)} &= \sum_{i=i^*}^N H\sqrt{2k_i H \,i \,\log(2/\delta)} \\
     & \leq H\sqrt{2HN\log(2/\delta)} \sum_{i=1}^N \sqrt{k_i} \\
     &= \mathcal{O}\left(  H \sqrt{KH N \log(1/\delta)}\right) =  \mathcal{O}\left( H \sqrt{T\log K \log(1/\delta)}\right),
\end{align*}
where we have used that the total number of epochs $N=\cO(\log K)$, and that
\begin{align*}
    \sum_{i=1}^N \sqrt{k_i} & = \sqrt{k_N}\left(1 + \frac{1}{\sqrt{2}} + \frac{1}{2} + \ldots N\text{-th term} \right)\\
    & \leq \sqrt{k_N}\left(1 + \frac{1}{\sqrt{2}} + \frac{1}{2} + ... \right)= \frac{\sqrt{2}}{\sqrt{2} -1} \sqrt{k_N}  \leq \frac{\sqrt{2}}{\sqrt{2} -1} \sqrt{K}.
    \end{align*}
     Next, we can upper bound the third to last term in the regret expression as
  \begin{align*}
      \sum_{i=i^*}^N H^2\dim_{\cE}\left(\cM_{m^*},\frac{1}{k_iH}\right) \le H^2 N \dim_{\cE}\left(\cM_{m^*},\frac{1}{KH}\right) = \cO \left(H^2d_{\cE}(\cM^*) \log K \right).
  \end{align*}
    Now, notice that, by substituting $\delta_i = \delta/2^i$, we can upper bound $\beta_{k_i}(\cP_{m^*},\delta_i)$ as follows:
  \begin{align*}
      \beta_{k_i}(\cP_{m^*},\delta_i) &=  \mathcal{O}\left( H^2 \, i \, \log \left( \frac{\cN(\mathcal{P}_{m^*},\frac{1}{k_i H},\norm{\cdot}_{\infty,1})}{\delta}\right) + H^2 \left(1+ \sqrt{i\log \frac{ k_iH(
    k_i H+1)}{\delta}} \right) \right)\\ & \leq \mathcal{O}\left( H^2 \, N \, \log \left( \frac{\cN(\mathcal{P}_{m^*},\frac{1}{K H},\norm{\cdot}_{\infty,1})}{\delta}\right) + H^2 \left(1+ \sqrt{N \log \frac{K H}{\delta}} \right) \right) \\
      & \leq \mathcal{O}\left( H^2 \, \log K \, \log \left( \frac{\cN(\mathcal{P}_{m^*},\frac{1}{K H},\norm{\cdot}_{\infty,1})}{\delta}\right) + H^2  \sqrt{ \log K \log (K H/\delta)}  \right)
  \end{align*}
  for infinite model classes, and $\beta_{k_i}(\cP_{m^*},\delta_i) = \cO\left(H^2\log K\log\left(\frac{|\cP_{m^*}|}{\delta} \right) \right)$ for finite model classes.
  With this, the second to last term in the regret expression can be upper bounded as 
  \begin{align*}
    & \sum_{i=i^*}^N 4\sqrt{\beta_{k_i}(\cP_{m^*},\delta_i)\dim_{\cE}\left(\cM_{d_i},\frac{1}{k_iH}\right)k_i H}  \\
    & \leq \mathcal{O} \bigg(H\sqrt{  \log K \, \log \left( \frac{\cN(\mathcal{P}_{m^*},\frac{1}{K H},\norm{\cdot}_{\infty,1})}{\delta}\right) + \sqrt{ \log K \log (K H/\delta)}  } \\ 
    & \qquad \times \sqrt{\dim_{\cE}\left(\cM_{m^*},\frac{1}{K H}\right)} \sum_{i=i^*}^N \sqrt{k_i H} \bigg)\\
    & \leq \mathcal{O} \bigg(H\sqrt{ \log K \, \log \left( \frac{\cN(\mathcal{P}_{m^*},\frac{1}{K H},\norm{\cdot}_{\infty,1})}{\delta}\right) \log\left(\frac{KH}{\delta}\right)} \\ 
    & \qquad \times \sqrt{KH\,\,\dim_{\cE}\left(\cM_{m^*},\frac{1}{K H}\right)}\bigg)\\
    & = \cO \left(H\sqrt{Td_{\cE}(\cM^*)\log(\cN(\cP^*)/\delta)\log K \log(T/\delta)} \right)
  \end{align*}
  for infinite model classes. Similarly, for finite model classes, we can upper bound this term by $\cO \left(H\sqrt{Td_{\cE}(\cM^*)\log\left(\frac{|\cP^*|}{\delta} \right)\log K} \right)$.
 
 Hence, for infinite model classes, the final regret bound can be written as
\begin{align*}
   R(T) &= \mathcal{O} \left(\log K \max \left \lbrace \frac{H^4  \log(1/\delta)}{(\frac{\Delta^2}{2}-2H\eta)^2}, H^2 \log\left(\frac{\cN(\cP_M)}{\delta}\right) \right \rbrace \right)\\ &\quad + \cO \left(H^2d_{\cE}(\cM^*) \log K+H\sqrt{Td_{\cE}(\cM^*)\log(\cN(\cP^*)/\delta)\log K \log(T/\delta)} \right)~.
\end{align*}
 The above regret bound holds with probability greater than
 \begin{align*}
      1-c'M\delta -\sum_{i=i^*}^{N}\frac{\delta}{2^{i-1}} \ge 1-c'M\delta -\sum_{i\ge 1}\frac{\delta}{2^{i-1}} =1-c'M\delta-2\delta~,
 \end{align*}
 which completes the proof of Theorem \ref{thm:general-rl}.
 
Similarly, for finite model classes, the final regret bound can be written as
\begin{align*}
    R(T) &= \mathcal{O} \left(\log K \max \left \lbrace \frac{H^4  \log(1/\delta)}{(\frac{\Delta^2}{2}-2H\eta)^2}, H^2 \log\left(\frac{|\cP_M|}{\delta}\right) \right \rbrace \right)\\ & \quad+  \cO \left(H^2d_{\cE}(\cM^*) \log K+H\sqrt{Td_{\cE}(\cM^*)\log\left(\frac{|\cP^*|}{\delta} \right)\log K} \right),
\end{align*}
which holds with probability greater than $1-cM\delta-2\delta$.
\end{appendix}
\end{document}